\newcolumntype{Y}{>{\centering\arraybackslash}X}
\newtheorem{definition}{Definition}
\newtheorem{lemma}{Lemma}
\newtheorem*{lemma*}{Lemma}
\newtheorem{proposition}{Proposition}
\newtheorem*{proposition*}{Proposition}
\newtheorem*{theorem*}{Theorem}
\newtheorem{remark}{Remark}
\newcommand{\mylabel}[2]
    {\protected@write\@auxout{}{\string\newlabel{#1}{{#2}{\thepage}%
      {\@currentlabelname}{\@currentHref}{}}}}}%
\newcommand{\mylabel}[2]
    {\protected@write\@auxout{}{\string\newlabel{#1}{{#2}{\thepage}}}}}
\newcommand\algref[1]{%
  \hyperref[#1]{\textcolor{black}{\ref*{#1}}}}
\def\myparagraph#1{\vspace{1pt}\noindent{\bf #1~~}}
\ificcvfinal\pagestyle{empty}\fi
\DeclareMathOperator*{\argmin}{arg\,min} 
\DeclareMathOperator*{\la}{\langle} 
\DeclareMathOperator*{\ra}{\rangle} 
\def\conv{\operatorname{conv}}
\providecommand{\abs}[1]{\lvert#1\rvert}
\providecommand{\norm}[1]{\lVert#1\rVert}
\newcommand{\bx}{\bm{x}}
\newcommand{\by}{\bm{y}}
\newcommand{\btheta}{\bm{\theta}}
\newcommand{\bphi}{\bm{\phi}}
\newcommand{\bTheta}{\bm{\Theta}}
\newcommand{\bPhi}{\bm{\Phi}}
\newcommand{\bsigma}{\bm{\sigma}}
\newcommand{\bomega}{\bm{\omega}}
\newcommand{\etab}{\bm{\eta}}
\newcommand{\blambda}{\bm{\lambda}}
\newcommand{\R}{\mathbb{R}}
\newcommand{\SX}{\mathcal{X}}
\newcommand{\SY}{\mathcal{Y}}
\newcommand{\SG}{\mathcal{G}}
\newcommand{\SV}{\mathcal{V}}
\newcommand{\SE}{\mathcal{E}}
\newcommand{\FG}{\mathbb{G}}
\newcommand{\FD}{\mathbb{D}}
\newcommand{\FV}{\mathbb{V}}
\newcommand{\FE}{\mathbb{E}}
\newcommand{\FA}{\mathbb{A}}
\definecolor{mycolor1}{RGB}{
	103,169,207
}
\definecolor{mycolor2}{RGB}{
	239,138,98
}
\colorlet{bfactorcolor}{mycolor1}
\colorlet{bpotentialcolor}{mycolor2}
\newcommand{\nosemic}{\renewcommand{\@endalgocfline}{\relax}}
\newcommand{\dosemic}{\renewcommand{\@endalgocfline}{\algocf@endline}}
\newcommand{\pushline}{\Indp}
\let\oldnl\nl
\newcommand{\nonl}{\renewcommand{\nl}{\let\nl\oldnl}}
\begin{document}
	
	\title{Bottleneck potentials in Markov Random Fields}
	
    \author{Ahmed Abbas $^{1,2}$ \qquad Paul Swoboda$^1$ \\
       \small $^1$Max Planck Institute for Informatics, \, $^2$ZEISS%
        }
	
	\maketitle
    \ificcvfinal\thispagestyle{empty}\fi
	\maketitle
	\begin{abstract}
We consider general discrete Markov Random Fields~(MRFs) with additional bottleneck potentials which penalize the maximum (instead of the sum) over local potential value taken by the MRF-assignment.
Bottleneck potentials or analogous constructions have been considered in
(i) combinatorial optimization (e.g.\ bottleneck shortest path problem, the minimum bottleneck spanning tree problem, bottleneck function minimization in greedoids),
(ii) inverse problems with $L_{\infty}$-norm regularization, and
(iii) valued constraint satisfaction on the $(\min,\max)$-pre-semirings.
Bottleneck potentials for general discrete MRFs are a natural generalization of the above direction of modeling work to Maximum-A-Posteriori (MAP) inference in MRFs.
To this end, we propose MRFs whose objective consists of two parts:
terms that factorize according to (i) $(\min,+)$, i.e.\ potentials as in plain MRFs, and (ii) $(\min,\max)$, i.e.\ bottleneck potentials.
To solve the ensuing inference problem, we propose high-quality relaxations and efficient algorithms for solving them.
We empirically show efficacy of our approach on large scale seismic horizon tracking problems.
\end{abstract}
	\section{Introduction}
In the field of computer vision MRFs have found many applications such as image segmentation, denoising, optical flow, 3D reconstruction and many more, see~\cite{opengm_benchmark_IJCV} for a non-exhaustive overview of problems and algorithms.
The above application scenarios are modelled well as a sum of unary/pairwise/ternary/$\ldots$ potentials on the underlying graphical structure.
More generally, whenever the error of fitting a model to data is captured by local terms, the objective can factorize into a sum of local error terms.
However, such objectives are not always appropriate.
In some computer vision applications, a single error can entail subsequent errors, rendering the solution useless and local terms are unable to properly penalize this.
Prominent examples are tracking problems~\cite{MST,2d_seismic_horizon_tracking_shortest_path,rathke2014probabilistic, BSP_Mosaicking}, where making a single error and following a wrong track results in low accuracy nonetheless.
In inverse problems, $L_{\infty}$-norm regularization penalizes the maximum deviation from the fitted model and is appropriate e.g. for some types of group sparsity~\cite{l1_infinity_icml_2009,liu2009blockwise,network_flow_for_structured_sparsity}.
In all of the above scenarios global potentials, which penalize the maximum value assignment w.r.t.\ a given set of local costs are the appropriate choice.
We call this maximum value a \textit{bottleneck} and the aim is to find a configuration such that its bottleneck has minimum cost.
Formally, optimizing a bottleneck objective can be written as
\begin{equation}
\label{eq:general-bottleneck-problem}
\min_{x \in X} \left( \max_{i} \{\psi_i \cdot x_i\} \right) 
\end{equation}
where $X \subset \R^n$ is the space of feasible elements of the optimization problem and $\psi \in \R^n$ is a real valued vector.
Additionally, the bottleneck objective can be written as the infinity norm $\norm{\psi \odot x}_{\infty}$, where $\odot$ is the Hadamard product.

\section{Related work}
Bottleneck-type objectives occur throughout many subfields of mathematical programming. The optimization problems are also often called min-max problems. 

\myparagraph{Bottleneck potentials in MRFs.}
The case of pairwise binary MRFs with bottleneck potentials has been addressed in~\cite{InfinityNormSegmentation} and has been applied to image segmentation.
As noted by the authors of~\cite{InfinityNormSegmentation}, incorporating a bottleneck term gives better segment boundaries and resolves \lq small cuts\rq\ (or shrinking bias) of graph cut~\cite{Boykov2006}.
In~\cite{PowerWaterShed-1} the authors interpret $L_{p}$-norm regularization for discrete labeling problems and for $p\in [1,\infty)$ as MAP-inference in MRFs and propose approximating the bottleneck potential corresponding to $L_{\infty}$ via a high value of $p$.
In~\cite{KohliCVPR2010} a more general class of MAP-inference with higher order costs were proposed which include the bottleneck potentials we investigate.
In~\cite{Pansari_2017_CVPR} the authors propose a model that can be specialized to bottleneck costs with linear distances and give a primal heuristic for solving it.


Furthermore, labeling problems containing only bottleneck objectives were considered in ~\cite{min-max-message-passing, min-max-propagation, consistent-labeling, consistent-labeling-2}, where~\cite{consistent-labeling, consistent-labeling-2} devised algorithms for special cases of the pure min-max labeling problem, and ~\cite{min-max-message-passing, min-max-propagation} devised message passing schemes with application in parallel machine scheduling.

In contrast to previous works~\cite{PowerWaterShed-1,InfinityNormSegmentation,min-max-message-passing, min-max-propagation, consistent-labeling, consistent-labeling-2} we investigate the mixed problem where ordinary and bottleneck potentials are both present in the same inference problem.
Unlike~\cite{InfinityNormSegmentation} we allow arbitrary MRFs.
Also unlike the algorithm in~\cite{PowerWaterShed-1,Pansari_2017_CVPR}, our algorithms are based on a linear programming relaxation resulting in a more rigorous approach. 
Lastly, in comparison to~\cite{KohliCVPR2010} we propose a stronger relaxation (theirs corresponds to~\ref{lemma:local-polytope-noninteger}) and an algorithm that scales to large problem sizes.

\myparagraph{Combinatorial bottleneck optimization problems.}
Many classical optimization problems have bottleneck counterparts, e.g.\ bottleneck shortest paths~\cite{BSP}, bottleneck spanning trees~\cite{bottleneckMST} and bottleneck function optimization in greedoids~\cite{korte1991greedoids}.
The above works consider the case of bottleneck potential only.
More similar to our work is the mixed case of linear and bottleneck costs, as investigated for the shortest and bottleneck path problem in~\cite{BSPSP}.

\myparagraph{$L_{\infty}$-norm regularization in inverse problems.}
For inverse problems $\min_{x \in \R^n} \norm{Ax - b}_p$ with uniform noise, the appropriate choice of norm is $p=\infty$~\cite{ConvexOptimizationBoyd}[Chapter~7.1.1].
Extensions of this basic $L_{\infty}$-norm are used in~\cite{l1_infinity_icml_2009} for multi-task learning and in~\cite{liu2009blockwise} for the multi-task lasso.
More generally, mixed norms with $L_{\infty}$-norms are useful for problems where group sparsity is desirable.
Proximal~\cite{l1_infinity_icml_2009}, block coordinate descent~\cite{liu2009blockwise} as well as network flow techniques~\cite{network_flow_for_structured_sparsity} have been proposed for numerical optimization of such problems.

\myparagraph{Semiring-based constraint satisfaction.}
In semiring-based constraint satisfaction problems the goal is to compute $\bigoplus_{x_V \in X} \bigodot_{A \in E} f_A(x_A)$, where $X$ is a space of labellings on a node set $V$, $E$ is collection of subsets of $V$, and $f_A$ are functions that depend only on nodes in the subset $A$~\cite{Werner-Soft08,Werner-CVWW-2007,semiring_induced_valuation_algebras_Kohlas_Wilson,Bistarelli1999,bistarelli2004semirings}.
Popular choices for the pair $(\bigoplus,\bigodot)$ is the $(\min,+)$-semiring which corresponds to MAP-inference and the $(+,\times)$-semiring which corresponds to computing the partition function.
In contrast to the above semirings, the algebra corresponding to bottleneck potentials $(\min,\max)$ is only a pre-semiring (the distributive law does not hold) and hence classical arc consistency algorithms as discussed in~\cite{Werner-Soft08,Werner-CVWW-2007} are not applicable and specialized methods are needed. 
For an in-depth study of the $(\min,\max)$-pre-semiring we refer to~\cite{cuninghame1979minimax}.
Our case however does not completely fit into (pre-)semiring based constraint satisfaction setting, since we are concerned with the mixed case in which both the $(\min,+)$-semiring and the $(\min,\max)$-pre-semiring occur together in a single optimization problem.

\myparagraph{Applications in horizon tracking.}
	The seismic horizon tracking problem, i.e. identifying borders between layers of various types of rock beds, has been addressed in ~\cite{MST,2d_seismic_horizon_tracking_shortest_path,PDE,3d_seismic_horizon_tracking_path_constraints} from the computational perspective.
    The authors in~\cite{MST} use a greedy method inspired by the minimum spanning tree problem to that end.
    In~\cite{2d_seismic_horizon_tracking_shortest_path} the authors propose to solve a shortest path problem to track seismic horizons along a 2-D sections of the original 3-D volume.
    In~\cite{PDE,3d_seismic_horizon_tracking_path_constraints} the authors set up linear equations to solve the horizon tracking problem.
    
    Most similar to our work is the minimum spanning tree inspired approach of~\cite{MST}. 
    In contrast to~\cite{MST}, we consider a rigorously defined optimization problem for which we develop a principled LP-based approach instead of greedily selecting solutions.
   Conceptually, the 2-D shortest path method~\cite{2d_seismic_horizon_tracking_shortest_path} is also similar to ours, however it cannot be extended to the 3-D setting we are interested in. Moreover, we allow for a more sophisticated objective function than~\cite{2d_seismic_horizon_tracking_shortest_path}.
Methods of~\cite{PDE,3d_seismic_horizon_tracking_path_constraints} do not use optimization at all but solve linear systems and require more user intervention.



\myparagraph{Contribution \& organization.}
Section~\ref{sec:model} introduces bottleneck potentials and their non-linear generalizations for general discrete MRFs. We also consider the mixed problem of MAP-inference for a combination of ordinary MRF-costs w.r.t.\ the $(\min,+)$-semiring and bottleneck potentials.
In Section~\ref{sec:algorithms} we derive special algorithms to solve the problem for chain graphs via a dynamic shortest path method, and for graphs without pairwise interactions via an efficient enumerative procedure.
Combining these two special cases, we derive a high-quality relaxation for the case of general graphs.
To solve this relaxation, we propose an efficient dual decomposition algorithm.
In Section~\ref{sec:experiments} we show empirically that our approach results in a scalable algorithm that gives improved accuracy on seismic horizon tracking as compared to MAP-inference in ordinary MRFs and a state-of-the-art heuristic.

Code and datasets are available at \url{https://github.com/LPMP/LPMP}.


\section{MRFs with bottleneck potentials}
\label{sec:model}
First, we will review the classical problem of Maximum-A-Posteriori (MAP) inference in Markov Random Fields (MRF).
Second, we will introduce the bottleneck labeling problem which extends MAP-MRF by additional bottleneck term that penalize the maximum value of potentials taken in an assignment (as opposed to the sum for MRFs).

\subsection{Markov Random Fields:}
A graph will be a tuple $G=(V,E)$ with undirected edges $E \subset \begin{pmatrix} V \\ 2 \end{pmatrix}$.
To each node $i \in V$ a label set $\SX_i$ is associated.
To each node $i \in V$ a \emph{unary potential} $\theta_i: \SX_i \rightarrow \R$ is associated, and to each edge $ij \in E$ a \emph{pairwise potential} $\theta_{ij}: \SX_i \times \SX_j \rightarrow \R$.
We will call $\SX_V = \prod_{i \in V} \SX_i$ the \emph{label space} and $x \in \SX$ a \emph{labeling}.
For subsets $U \subset V$ we define $\SX_U = \prod_{i \in U} \SX_i$, analogously we refer to labels $x_U \in \SX_U$.
In particular, $x_i$ refers to a node labeling and $x_{ij} = (x_i,x_j)$ to an edge labeling.
A tuple $(G,\SX,\btheta)$ consisting of a graph, corresponding label space and potentials is called a Markov Random Field (MRFs).
The problem
\begin{equation}
\label{eq:MRF}
\min_{\bx \in \SX} \btheta(\bx), \quad \btheta(\bx) := \sum_{i \in V} \theta_i(x_i) + \sum_{ij \in E} \theta_{ij}(x_{ij})\,
\end{equation}
is called the Maximum-A-Posteriori (MAP) inference problem in MRFs.

\myparagraph{Local Polytope Relaxation:}
We use the over-complete representation to obtain a linear programming relaxation of the optimization problem~\eqref{eq:MRF}.
For $i\in V$ we associate the $k$-th label from $\SX_i$ with one-hot encoding $e_k = (0,\ldots,0,\underbrace{1}_k,0,\dots 0)$ which is a unit vector of length $\abs{\SX_i}$ with a $1$ at the $k$-th location.
In other words, we can write $\SX_i = \{e_1, e_2, \ldots e_{\abs{\SX_i}}\}$ $\forall i \in V$.
Analogously, we can write $\SX_{ij} = \{e_1,\ldots, e_{\abs{\SX_{ij}}}\}$ $\forall ij \in E$.
To obtain a convex relaxation, we define \emph{unary marginals} as $\mu_i \in \conv{\SX_i}$, $i \in V$, and \emph{pairwise marginals} as $\mu_{ij} \in \conv{\SX_{ij}}$, $ij \in E$.
We couple unary and pairwise marginals together to obtain the \emph{local polytope relaxation}~\cite{WernerLocalPolytope}.
\begin{equation}
\Lambda = \left\{\mu \, \middle|\begin{array}{c} 
	\mu_i(x_i) = \sum\limits_{x_j \in \SX_j} \mu_{ij}(x_{ij}), \; \forall ij \in E, \; x_i \in \SX_i \\
	\mu_f \in \conv{\SX_f}, \; \forall f \in V \cup E
\end{array} \right\}
\label{eq:local-polytope-constraints}
\end{equation}
With the local polytope we can relax the problem of MAP-inference in MRFs~\eqref{eq:MRF} as:
 \begin{equation}
 \min_{\mu \in \Lambda} \sum_{i \in V} \la \theta_i, \mu_i \ra + \sum_{ij \in E} \la \theta_{ij}, \mu_{ij} \ra\,
 \label{eq:local-polytope}
 \end{equation}
Note that the relaxation~\eqref{eq:local-polytope} subject to constraint~\eqref{eq:local-polytope-constraints} is tight for some graphs such as trees and for special families of cost functions including different forms of submodularity~\cite{KolmogrovCSPs}.

\subsection{Bottleneck labeling problem:}
Given an MRF, we associate to it a second set of potentials, which we call bottleneck potentials.
As opposed to MRF, however, the corresponding assignment cost is not given by the sum of individual potential values but by their maximum.
The goal of inference in a pure bottleneck labeling problems (i.e.\ with all zero MRF potentials) is thus to find a labeling such that the maximum bottleneck potential value taken by the labeling is minimal.

\begin{definition}[Bottleneck labeling problem]
	Let an MRF be given. Additionally, let \emph{unary bottleneck potentials} $\phi_i: \SX_i \rightarrow \R$ $\forall i \in V$ and \emph{pairwise bottleneck potentials} $\phi_{ij}: \SX_i \times \SX_j \rightarrow \R$ $\forall ij \in E$  are also given.
    We call the set of all possible values taken by bottleneck potentials as \emph{bottleneck values}, i.e.\ 
    \begin{equation}
    B = \{ \phi_f(x_f) : f \in V \cup E, x_f \in \SX_f\}\,.
    \end{equation}
    Let \emph{bottleneck costs} $\zeta: B \rightarrow \R$ be given.
	The bottleneck labeling problem is defined as
	\begin{subequations}\label{eq:bottleneck-labeling-problem}
		\begin{align}
		\min\limits_{\bx \in \SX, b \in B}  & \btheta(\bx) + \zeta(b)
		\label{eq:bottleneck-labeling-objective} \\
		\text{s.t.} \qquad & \phi_f(x_f)  \leq b \quad \forall f \in V \cup E
		\label{eq:bottleneck-constraints}
		\end{align}
	\end{subequations}
and $\btheta(\bx)$ is defined in \eqref{eq:MRF}.
\end{definition}
It is straightforward to adapt our work to the case of multiple bottlenecks and triplet/quadrupelt/$\ldots$ potentials.
However we focus on models with only a single bottleneck and pairwise potentials for simplicity.

A special case of the problem \eqref{eq:bottleneck-labeling-problem} was considered by~\cite{InfinityNormSegmentation} where the authors only allow binary labels, special form of MRF costs and $\zeta(b) = b$.
Additionally, heuristics for extending to the multi-label case were given in ~\cite{PowerWaterShed-1}.
Below we propose exact algorithms for multi-label chain graphs and an LP-relaxation for general graphs.

\myparagraph{Example.}
In the special case when $\zeta(b) = b$ and MRF costs are zero i.e. $\btheta = 0$ then the problem reduces to a pure bottleneck labeling problem as:
\begin{equation}
\label{eq:bottleneck-labeling-problem-pure}
\min\limits_{\bx \in \SX}  \max\limits_{f \in V \cup E}(\phi_f(x_f)) \\
\end{equation}

Note that if we are know the optimal bottleneck value $b^*$  in $B$, then the bottleneck labeling problem can be reduced to the MAP-inference problem in MRFs.
This reduction can be done by setting the unary and pairwise MRF costs to $\infty$ for the labelings which have bottleneck potentials greater than the optimal value $b^*$. i.e.
\begin{equation}
		\theta_e(x_e) \coloneqq \infty \quad \forall e, x_e: \phi_e(x_e) > b^* \\
\label{eq:bottleneck-mrf-reduction}
\end{equation}
 Then, the constraints ~\eqref{eq:bottleneck-constraints} will be automatically satisfied by a feasible solution of the MAP-inference problem.


\section{Algorithms:}
\label{sec:algorithms}
In this section, algorithms for solving the bottleneck labeling problem are proposed. 
We first present efficient and exact algorithms for edge-free graphs and chain graphs.
Later, we will use these two algorithms for solving the problem on general graphs using dual-decomposition. 
The algorithms for edge-free graphs and chains are designed with the end goal of using them inside dual decomposition for general graphs and therefore contain extra steps.

\subsection{Bottleneck labeling with unary potentials:}
Assume that the graphical model does not contain any edges i.e.\ $E = \varnothing$, so only unary potentials need to be considered. 
The problem~\eqref{eq:bottleneck-labeling-problem} in this case can be efficiently solved with Algorithm~\algref{alg:unary-bottleneck-labeling-problem}.\\ 
\begin{algorithm}[!ht]
\SetAlgorithmName{unary\_bottleneck}{unary\_bottleneck}{List of algorithms}
\caption{bottleneck labeling on unary graph}
	\mylabel{alg:unary-bottleneck-labeling-problem}{\textbf{unary\_bottleneck}}
	\KwData{ \\
		MRF without edges: $(G=(V,\varnothing), \SX, \{\theta_i\}_{i \in V})$, \\
		Bottleneck potentials: $\{\phi_i\}_{i \in V}$.
	}
	\KwResult{ 
		Costs of labelings: $M = \left\{ (b,c) : 
		\begin{array}{c}
		c = \min_{\bx} \sum_{i \in V} \theta_i(x_i), \\
		\phi_i(x_i) \leq b \quad \forall i \in V
		\end{array}
		\right\}
		$.}
	\tcp{Merge labels of all nodes in $\Theta$:}
	$\Theta = \{(i,x_i) : i \in V, x_i \in \SX_i \}$\; 
	Sort $\Theta$ according to $\phi_i(x_i)$ \label{alg:unary-bottleneck-labeling-problems-sorting}\;
	$c = 0$, $l = \infty \in \R^{|V|}$, $S = \varnothing$, $M = \varnothing$\;
	\For{$(i, x_i) \in \Theta$ in ascending order} {
		\If{$i \notin S$} {
			$ S = S \cup \{i\}$\; 
			$ c = c + \theta_i(x_i)$\;
			$l_i = \theta_i(x_i)$\;
		}
		\If{$\theta_i(x_i) < l_i$} {
			$c = c - l_i + \theta_i(x_i)$\;
			$l_i = \theta_i(x_i)$\; 
		}
		\If{$S = V$} {
			$ M = M \cup \{(\phi_i(x_i), c)\}$\;
		}
	}
\end{algorithm}
Algorithm \algref{alg:unary-bottleneck-labeling-problem} enumerates all bottleneck values in ascending order. For every bottleneck value $b$, an optimal node labeling can be found by choosing for each node $i$ the best label w.r.t.\ potentials $\theta_i$ that is feasible to bottleneck constraints~\eqref{eq:bottleneck-constraints}.
The costs of such labelings are stored in $M$ .
Updating the best node labeling between consecutive bottleneck values can be done by only checking the nodes for which~\eqref{eq:bottleneck-constraints} has changed the feasible set.
Finally, the optimum bottleneck value can be computed by
\begin{equation}
	(b^*, c^*) = \argmin_{(b,c) \in M} \left[ c + \zeta(b) \right]
	\label{eq:optimal-b-c}
\end{equation}
Once the optimal bottleneck value $b^*$ is computed, the problem can be reduced to inference in MRF by disallowing the configurations which have bottleneck potentials greater than $b^*$ as mentioned in \eqref{eq:bottleneck-mrf-reduction}.

\begin{proposition}
\label{prop:unary-bottleneck-labeling-problem-runtime}
The runtime of Algorithm~\algref{alg:unary-bottleneck-labeling-problem} is $\mathcal{O}(L\log{}L + L)$ where $L = \sum_{i \in V} |\SX_i|$
\end{proposition}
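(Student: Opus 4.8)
The plan is to account for the cost of each line of Algorithm~\algref{alg:unary-bottleneck-labeling-problem} separately and then combine them. Let $L = \sum_{i \in V} |\SX_i| = |\Theta|$. First I would observe that constructing the set $\Theta$ requires iterating over all node--label pairs once, which is $\mathcal{O}(L)$. The sorting step on line~\ref{alg:unary-bottleneck-labeling-problems-sorting} sorts $\Theta$ by the bottleneck potential values $\phi_i(x_i)$; since $|\Theta| = L$, a comparison sort costs $\mathcal{O}(L \log L)$. This already gives the dominant term.

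Next I would analyze the main \texttt{for} loop. The loop body runs exactly $L$ times, once per element of $\Theta$. Inside the loop, every operation is $\mathcal{O}(1)$: the membership test $i \notin S$ and the insertion $S = S \cup \{i\}$ can be done in constant time using a boolean array indexed by $V$ (rather than a generic set); the updates to $c$, to the vector $l$ at a single coordinate $l_i$, and the comparisons $\theta_i(x_i) < l_i$ are all constant-time arithmetic; and the test $S = V$ can be made $\mathcal{O}(1)$ by maintaining a counter of how many distinct nodes have been inserted into $S$ and comparing it to $|V|$. The only subtlety is the append $M = M \cup \{(\phi_i(x_i), c)\}$, which is an amortized-$\mathcal{O}(1)$ push onto a dynamic list and happens at most $L$ times, contributing $\mathcal{O}(L)$ in total. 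Hence the loop costs $\mathcal{O}(L)$.

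Summing the three contributions — $\mathcal{O}(L)$ for building $\Theta$, $\mathcal{O}(L \log L)$ for the sort, and $\mathcal{O}(L)$ for the loop — yields a total runtime of $\mathcal{O}(L \log L + L)$, as claimed. (Of course $\mathcal{O}(L\log L + L) = \mathcal{O}(L \log L)$, but the statement keeps both terms to make the decomposition transparent.)

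The main obstacle is not any deep argument but rather justifying the $\mathcal{O}(1)$ cost of each loop operation under a reasonable model of computation: in particular, that the set $S$ is implemented as an indicator array so that insertions and membership queries are constant time, that $S = V$ is checked via a size counter, and that $M$ is a list supporting amortized constant-time appends. I would state these implementation choices explicitly at the start of the proof so that the per-iteration bound is unambiguous, and then the accounting above goes through routinely.
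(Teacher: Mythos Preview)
Your proposal is correct and follows essentially the same reasoning as the paper: the paper does not give a formal proof but simply remarks that ``the most expensive operation in Algorithm~\algref{alg:unary-bottleneck-labeling-problem} is sorting,'' which is exactly the dominant $\mathcal{O}(L\log L)$ term you identify. Your line-by-line accounting and the explicit implementation choices for $S$ and $M$ are a more careful version of the same argument, so there is nothing to add.
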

The most expensive operation in Algorithm~\algref{alg:unary-bottleneck-labeling-problem} is sorting. However, the sorting can be reused when the algorithm is run multiple times with varying linear potentials $\theta$, which is the case in our dual decomposition approach for general graphs.

\subsection{Bottleneck labeling problem on chains:}   
Assume $V = [n]$ and $E = \{(1,2),(2,3),\ldots,(n-1,n)\}$ is a chain. Even though the relaxation over the local polytope for inference in chains is tight for pairwise MRFs, introducing the bottleneck potential (as also done in~\cite{KohliCVPR2010}) destroys this property which is demonstrated in the Appendix in Lemma~\ref{lemma:local-polytope-noninteger}.  Therefore, we propose an efficient algorithm to solve the bottleneck labeling problem~\eqref{eq:bottleneck-labeling-problem} exactly on chains. \\
First, we note that the MAP-MRF  problem~\eqref{eq:MRF} can be modeled through a shortest path problem in a directed acyclic graph. Figure~\ref{fig:shortest-path-illustration} illustrates the construction for the case of $n=3$. 
We treat each label $x_i$ of a variable $i$ in $V$ in the chain as a pair of nodes $x_i$, and $\overline{x}_i$ in the shortest path digraph $D=(W,A)$.
Each unary cost $\theta_i(x_i)$ becomes an arc cost for $(x_i,\overline{x}_i)$.
Each pairwise cost $\theta_{ij}(x_i,x_j)$ becomes an arc cost $(\overline{x}_i, x_j)$.
The source and sink nodes $s, t$ are connected with the labels of first and last node of the graphical model resp. with zero costs for modeling the shortest path problem.
Algorithm~\algref{alg:chain-digraph-conversion} in the Appendix gives the general construction. \\
The shortest $s,t$-path $P$ in graph $D$ having cost $\bsigma(P)$ corresponds to an optimum labeling for $\btheta(\bx)$ from~\eqref{eq:bottleneck-labeling-problem} in the chain graphical model $G$.
\begin{figure}[H]
	\centering
	\resizebox{.45\textwidth}{!}{\begin{tikzpicture}
	\node[circle,fill] (x11) {}; 
	\node[circle,fill,below of=x11] (x12) {};
	\node[circle,fill,below of=x12] (x13) {};
	\node[circle,draw=black,fill=white,right of=x11] (x11') {};
	\node[circle,draw=black,fill=white,right of=x12] (x12') {};
	\node[circle,draw=black,fill=white,right of=x13] (x13') {};   
	\node[label=above:$u$,fit=(x11) (x13')] {};
	\node[draw,inner sep=2mm,label=below:$\theta_u$,fit=(x11) (x13')] {};
	
	\node[circle,fill,right=3cm of x11] (x21) {}; 
	\node[circle,fill,below of=x21] (x22) {}; 
	\node[circle,fill,below of=x22] (x23) {}; 
	\node[circle,draw=black,fill=white,right of=x21] (x21') {};
	\node[circle,draw=black,fill=white,right of=x22] (x22') {};
	\node[circle,draw=black,fill=white,right of=x23] (x23') {};   
	
	\node[label=above:$v$,fit=(x21) (x23')] {};
	\node[draw,inner sep=2mm,label=below:$\theta_v $,fit=(x21) (x23')] {};
	
	\node[circle,fill,right=3cm of x21] (x31) {}; 
	\node[circle,fill,below of=x31] (x32) {}; 
	\node[circle,fill,below of=x32] (x33) {};
	\node[circle,draw=black,fill=white,right of=x31] (x31') {};
	\node[circle,draw=black,fill=white,right of=x32] (x32') {};
	\node[circle,draw=black,fill=white,right of=x33] (x33') {};   
	
	\node[label=above:$w$,fit=(x31) (x33')] {};
	\node[draw,inner sep=2mm,label=below:$\theta_w $,fit=(x31) (x33')] {};
	
	\node[circle,fill=gray,left=3cm of x12] (s) {};
	\node[label=above:$s$,fit=(s)] {};
	
	\node[circle,fill=gray,right=3cm of x32] (t) {};
	\node[label=above:$t$,fit=(t)] {};
	
	\draw[->] (s) to (x11);
	\draw[->] (s) to (x12);
	\draw[->] (s) to (x13);
	
	\draw[->] (x11) to (x11');
	\draw[->] (x12) to (x12');
	\draw[->] (x13) to (x13');
	
	\draw[->] (x11') to (x21);
	\draw[->] (x11') to (x22);
	\draw[->] (x11') to (x23);
	\draw[->] (x12') to (x21);
	\draw[->] (x12') to (x22);
	\draw[->] (x12') to (x23);
	\draw[->] (x13') to (x21);
	\draw[->] (x13') to (x22);
	\draw[->] (x13') to (x23);
	\node[label=below:$\theta_{uv}$,fit=(x11') (x23)] {};
	
	\draw[->] (x21) to (x21');
	\draw[->] (x22) to (x22');
	\draw[->] (x23) to (x23');
	
	\draw[->] (x21') to (x31);
	\draw[->] (x21') to (x32);
	\draw[->] (x21') to (x33);
	\draw[->] (x22') to (x31);
	\draw[->] (x22') to (x32);
	\draw[->] (x22') to (x33);
	\draw[->] (x23') to (x31);
	\draw[->] (x23') to (x32);
	\draw[->] (x23') to (x33);
	\node[label=below:$\theta_{vw}$,fit=(x21') (x33)] {};
	
	\draw[->] (x31) to (x31');
	\draw[->] (x32) to (x32');
	\draw[->] (x33) to (x33');
	
	\draw[->] (x31') to (t);
	\draw[->] (x32') to (t);
	\draw[->] (x33') to (t);
\end{tikzpicture}\unskip}
	\caption[]{Shortest path network for a chain MRF with nodes $\{u,v,w\}$ and 3 labels for each node. The state $x_i$ of each node $i \in V$ represented by \tikz \node [circle, fill=black, scale = 0.7]{}; has been duplicated to $\overline{x}_i$ represented by \tikz \node [circle, draw=black, fill=white, scale = 0.7]{};  
		to introduce unary potentials as arc costs. Text below the arcs represent their costs. Missing arcs have cost $\infty$.}
	\label{fig:shortest-path-illustration}
\end{figure}
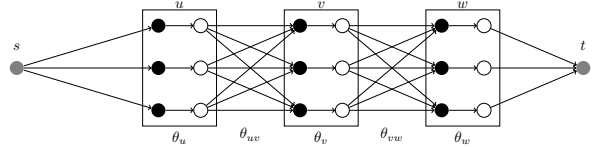
Algorithm~\algref{alg:chain-bottleneck-labeling-problem} uses the constructed directed acyclic graph to solve the bottleneck labeling problem~\eqref{eq:bottleneck-labeling-problem} on chain MRF by an iterative procedure.
For each bottleneck thresholds $b$ in $B$, the shortest path is found such that only those edges are used whose bottleneck potentials are within the threshold. This way a matrix $M$ of all possible pairs of bottleneck values $b$ and shortest path costs $c$ is computed. The optimal bottleneck value $b^*$ can be then found by~\eqref{eq:optimal-b-c}. \\
Naively implementing Algorithm~\algref{alg:chain-bottleneck-labeling-problem} by computing a shortest path from scratch in each iteration of the loop will result in high complexity. Note that when iterating over bottleneck threshold values in ascending order, exactly one edge is added per iteration.
Therefore, dynamic shortest path algorithms can be used for recomputing shortest paths more efficiently. Additionally, since the graph $D$ is directed and acyclic, the shortest path in each iteration can be found in linear time by breadth-first traversal~\cite{shortestPathBook}.
These improvements are detailed in Algorithm~\algref{alg:dynamic-shortest-path-for-chains} in the Appendix.
\begin{algorithm}
\SetAlgorithmName{chain\_bottleneck}{chain\_bottleneck}{List of algorithms}
	\caption{bottleneck labeling on chain graphs}
	\KwData{
	    Chain MRF: $R = ((V,E), \SX, \{\theta_f\}_{f \in V \cup E})$, \\
		Bottleneck potentials:  $\{\phi_f(x_f)\} _ {f \in V \cup E}$, \\
	}
	\KwResult{ \\
		$M = \left\{ (b,c) : 
		\begin{array}{c}
		c = \min_{\bx} \sum_{f \in F} \theta_f(x_f), \\
		\phi_f(x_f) \leq b \quad \forall f \in V \cup E
		\end{array}
		\right\}$} 
	\tcp{Represent chain as DAG:}
	$(D=(W,A), \bsigma, \bomega, s, t) \leftarrow ~\algref{alg:chain-digraph-conversion}(R, \bphi)$ \;
	\tcp{Shortest path distances:}
	$d(s) \coloneqq 0$, \hfill $d(w) \coloneqq \infty, \quad \forall w \in W\setminus\{s\}$\;
	Sort $A$ according to values $\bomega$ \label{alg:chain-bottleneck-labeling-problem-sorting}\;
	\For{$(n_i, n_j) \in A$ in ascending order \label{alg:chain-bottleneck-labeling-problem-arc-addition-begin}} {
		$A' = (n_i, n_j) \cup A' $ \;
		\tcp{Update $d$ w.r.t arc-costs $\bsigma$:}
		$d = \algref{alg:dynamic-shortest-path-for-chains}((W,A'),d,\bsigma,(n_i, n_j))$ \; \label{alg:chain-bottleneck-labeling-problem-shortest-path-update}
		\tcp{Check for s-t path:}
		\If{$d(t) < \infty$} {
			
			\tcp{Store path costs:}
			$M = M \cup \{(\omega(n_i, n_j), d(t))\}$\;
		}
	}
	\label{alg:chain-bottleneck-labeling-problem-arc-addition-end}
	\mylabel{alg:chain-bottleneck-labeling-problem}{\textbf{chain\_bottleneck}}
\end{algorithm}	
\begin{proposition}
	\label{prop:chain-bottleneck-labeling-problem-runtime}
	The worst case run-time of Algorithm~\algref{alg:chain-bottleneck-labeling-problem} is $\mathcal{O}(|A|^2)$ where, $A$ is the arc-set in underlying graph $D=(W,A)$.
\end{proposition}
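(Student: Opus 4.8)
The plan is to bound separately the three phases of Algorithm~\algref{alg:chain-bottleneck-labeling-problem}: (i) the one-time construction of the directed acyclic graph $D=(W,A)$ via Algorithm~\algref{alg:chain-digraph-conversion} and the initialization of the distance labels $d$, (ii) the sorting of the arc set $A$ according to the bottleneck values $\bomega$, and (iii) the main loop that inserts arcs one at a time and recomputes shortest-path distances. For (i), the construction visits every node and label of the chain MRF and every pairwise entry once and creates a bounded number of arcs per such item, so it runs in $O(|W|+|A|)$; since in the construction of Figure~\ref{fig:shortest-path-illustration} every vertex of $W$ except $s$ is the head of at least one arc, we have $|W|=O(|A|)$, and hence this phase — together with the initialization $d(s)\coloneqq 0$, $d(w)\coloneqq\infty$ — costs $O(|A|)$. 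For (ii), sorting $|A|$ arcs costs $O(|A|\log|A|)$.

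The crux is phase (iii). The loop iterates exactly $|A|$ times, once per arc of $A$, and the only non-constant work inside an iteration is the call to the dynamic shortest-path subroutine Algorithm~\algref{alg:dynamic-shortest-path-for-chains} on the current graph $(W,A')$ with $|A'|\le|A|$. Here I would invoke the acyclicity of $D$: its topological order is fixed once and for all by the layered structure of Figure~\ref{fig:shortest-path-illustration}, so after the single arc $(n_i,n_j)$ is inserted only the distances of descendants of $n_j$ in that order can change, and they can be updated by a breadth-first/topological sweep in which every arc of $A'$ is relaxed $O(1)$ times~\cite{shortestPathBook} (equivalently, one may simply recompute all shortest-path distances from scratch by the standard DAG dynamic program in $O(|A'|+|W|)=O(|A|)$ time). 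Consequently each invocation costs $O(|A'|)=O(|A|)$ in the worst case, so the loop as a whole costs $|A|\cdot O(|A|)=O(|A|^2)$.

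Summing the three contributions gives $O(|A|)+O(|A|\log|A|)+O(|A|^2)=O(|A|^2)$, which is the claimed bound. The main obstacle is the analysis of Algorithm~\algref{alg:dynamic-shortest-path-for-chains}: one must argue that a single-arc insertion into a DAG triggers only a linear amount of re-relaxation work, i.e.\ that no arc is examined more than a constant number of times during one update. This follows because the vertices affected by the insertion are processed in the (fixed) topological order of $D$, so distances become final on first visit and no vertex is enqueued twice; the remaining bookkeeping inside an iteration — maintaining $A'$, testing $d(t)<\infty$, and appending $(\omega(n_i,n_j),d(t))$ to $M$ — is clearly $O(1)$, and since $|M|\le|A|$ the final optimization~\eqref{eq:optimal-b-c} over $M$ does not change the asymptotics.
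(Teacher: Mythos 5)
Your proposal is correct and follows essentially the same route as the paper's own argument: each dynamic shortest-path update on the DAG costs $O(|A|)$, the main loop performs one such update per inserted arc for $|A|$ iterations, giving $O(|A|^2)$, and the $O(|A|\log|A|)$ sorting cost is dominated. The extra detail you supply on the construction phase and on why a single-arc insertion triggers only linear re-relaxation work is a harmless elaboration of what the paper asserts more tersely.
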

While the worst-case runtime of~\algref{alg:chain-bottleneck-labeling-problem} is quadratic in the number of edges of the underlying shortest path graph, the average case runtime is better, since not for every edge a new shortest path needs to be computed and often a shortest path computation can reuse previous results for speedup. Moreover, the sorting operation on line~\algref{alg:chain-bottleneck-labeling-problem-sorting} can be re-used similar to Algorithm~\algref{alg:unary-bottleneck-labeling-problem}.

\begin{remark}
If we have a pure bottleneck labeling problem without the MRF potentials ~\eqref{eq:bottleneck-labeling-problem-pure} then the problem can be solved in linear-time by sorting the bottleneck weights and halving the number of edges by the median edge as mentioned in ~\cite{BSP}. The problem we consider is more general and needs to account for the MRF potentials.
\end{remark}

\begin{remark}
The chain bottleneck labeling problem can also be solved for linear bottleneck cost $\zeta(x) = x$ by the method from~\cite{BSPSP}.
However, the method from~\cite{BSPSP} is not polynomial w.r.t.\ bottleneck values $B$ but requires them to be small integers to be efficient, making it unsuitable for our purposes.
\end{remark}

\begin{remark}[Bottleneck labeling on trees]
The dynamic shortest path algorithm can be modified to work on trees as well, where we replace shortest path computations by belief propagation.
For clarity of presentation we have chosen to restrict ourselves to chains.
\end{remark}

\subsection{Relaxation for the bottleneck labeling problem on arbitrary graphs:}
\label{subsection:arbitrary-graphs}
\begin{figure*}[h]
	\centering	
	\resizebox{0.6\textwidth}{!}{
 	\begin{tikzpicture}
[node/.style={circle,draw=black,fill=white,thick,
	inner sep=0.1pt,minimum size=2mm},
edgefactor/.style={rectangle,draw=black,fill=black,thick,
	inner sep=0.1pt,outer sep = 0pt,minimum size=2mm},
edgefactorSheared/.style={trapezium,trapezium left angle=135,trapezium right angle=45,draw=black,fill=black,thick,
	inner sep=0.1pt,minimum size=1mm, minimum width=1mm},
maxEdgefactor/.style={diamond, draw=bpotentialcolor,fill=white,thick,
	inner sep=0.1pt,minimum size=2mm},
maxfactor/.style={star, draw=bfactorcolor,fill=bfactorcolor,thick,
	inner sep=0.1pt,minimum size=2mm}]

\node[node] (left-bot) at (0,0) {\small \small $x_2$};
\node[edgefactor] (bot-factor) [right of=left-bot] {};
\node[] (fake-left-factor) [above right of=left-bot] {};
\node[edgefactorSheared, rotate around = {90:(fake-left-factor.center)}] (left-factor) [above right of=left-bot] {};
\node[node] (right-bot) [right of=bot-factor] {\small $x_3$};
\node[node] (left-top) [above right of=left-factor] {\small $x_1$};
\node[edgefactor] (top-factor) [right of=left-top] {};
\node[node] (right-top) [right of=top-factor] {\small $x_4$};
\node[] (fake-right-factor) [above right of=right-bot] {};
\node[edgefactorSheared, rotate around = {90:(fake-right-factor.center)}] (right-factor) [above right of=right-bot] {};

\node[maxEdgefactor] [yshift = -7mm] (bot-max-factor) [above of=bot-factor] {};
\node[maxEdgefactor] [yshift = -7mm] (top-max-factor) [above of=top-factor] {};
\node[maxEdgefactor] [yshift = -7mm] (left-max-factor) [above of=left-factor] {};
\node[maxEdgefactor] [yshift = -7mm] (right-max-factor) [above of=right-factor] {};
\node[maxfactor] [yshift = 3mm] (max-factor) [below left of=top-max-factor]{};

\draw (left-bot) to (bot-factor) to (right-bot) to (right-factor) to (right-top) to
(top-factor) to (left-top) to (left-factor) to (left-bot);
\draw[draw=bpotentialcolor] (left-bot) to (bot-max-factor) to (right-bot) to (right-max-factor) to (right-top) to
(top-max-factor) to (left-top) to (left-max-factor) to (left-bot);
\draw[draw=bfactorcolor] (left-max-factor) to (max-factor) to (right-max-factor);
\draw[draw=bfactorcolor] (top-max-factor) to (max-factor) to (bot-max-factor);

\node[] (fake-up-arrow-1-tail) [above right of = right-top]{};
\node[] (fake-up-arrow-1-head) [above right of = fake-up-arrow-1-tail]{};
\draw [bend left, ->,very thick, black] (fake-up-arrow-1-tail) to node [auto] {} (fake-up-arrow-1-head) ;
\node[] (fake-down-arrow-1-tail) [below right of = right-top]{};
\node[] (fake-down-arrow-1-head) [below right of = fake-down-arrow-1-tail]{};
\draw [bend right,->,very thick, black] (fake-down-arrow-1-tail) to node [auto] {} (fake-down-arrow-1-head) {};

\node[node] (left-bot-mrf) [below right of=fake-up-arrow-1-head] {\small \small $x_2$};
\node[edgefactor] (bot-factor-mrf) [right of=left-bot-mrf] {};
\node[] (fake-left-factor-mrf) [above right of=left-bot-mrf] {};
\node[edgefactorSheared, rotate around = {90:(fake-left-factor-mrf.center)}] (left-factor-mrf) [above right of=left-bot-mrf] {};
\node[node] (right-bot-mrf) [right of=bot-factor-mrf] {\small $x_3$};
\node[node] (left-top-mrf) [above right of=left-factor-mrf] {\small $x_1$};
\node[edgefactor] (top-factor-mrf) [right of=left-top-mrf] {};
\node[node] (right-top-mrf) [right of=top-factor-mrf] {\small $x_4$};
\node[] (fake-right-factor-mrf) [above right of=right-bot-mrf] {};
\node[edgefactorSheared, rotate around = {90:(fake-right-factor-mrf.center)}] (right-factor-mrf) [above right of=right-bot-mrf] {};
\draw (left-bot-mrf) to (bot-factor-mrf) to (right-bot-mrf) to (right-factor-mrf) to (right-top-mrf) to
(top-factor-mrf) to (left-top-mrf) to (left-factor-mrf) to (left-bot-mrf);

\node[node] (left-bot-mp)  [below right of=fake-down-arrow-1-head] {\small \small $y_2$};
\node[] (fake-left-factor-mp) [above right of=left-bot-mp] {};
\node[] (fake-bot-factor-mp) [right of=left-bot-mp] {};
\node[node] (right-bot-mp) [right of=fake-bot-factor-mp] {\small $y_3$};
\node[node] (left-top-mp) [above right of=fake-left-factor-mp] {\small $y_1$};
\node[] (fake-top-factor-mp) [right of=left-top-mp] {};
\node[node] (right-top-mp) [right of=fake-top-factor-mp] {\small $y_4$};
\node[] (fake-right-factor-mp) [above right of=right-bot-mp] {};

\node[maxEdgefactor] [yshift = -7mm] (bot-max-factor-mp) [above of=fake-bot-factor-mp] {};
\node[maxEdgefactor] [yshift = -7mm] (top-max-factor-mp) [above of=fake-top-factor-mp] {};
\node[maxEdgefactor] [yshift = -7mm] (left-max-factor-mp) [above of=fake-left-factor-mp] {};
\node[maxEdgefactor] [yshift = -7mm] (right-max-factor-mp) [above of=fake-right-factor-mp] {};
\node[maxfactor] [yshift = 3mm] (max-factor-mp-f) [below left of=top-max-factor-mp]{};

\draw[draw=bpotentialcolor] (left-bot-mp) to (bot-max-factor-mp) to (right-bot-mp) to (right-max-factor-mp) to (right-top-mp) to
(top-max-factor-mp) to (left-top-mp) to (left-max-factor-mp) to (left-bot-mp);
\draw[draw=bfactorcolor] (left-max-factor-mp) to (max-factor-mp-f) to (right-max-factor-mp);
\draw[draw=bfactorcolor] (top-max-factor-mp) to (max-factor-mp-f) to (bot-max-factor-mp);
\draw[dotted] (right-bot-mrf) to (right-bot-mp);
\draw[dotted] (left-top-mrf) to (left-top-mp);
\draw[dotted] (right-top-mrf) to (right-top-mp);
\draw[dotted] (left-bot-mrf) to (left-bot-mp);

\node[] (fake-up-arrow-2-tail) [below right of = right-top-mrf]{};
\node[] (fake-up-arrow-2-head) [right of = fake-up-arrow-2-tail]{};
\draw [->,very thick, black] (fake-up-arrow-2-tail) to node [auto] {} (fake-up-arrow-2-head) ;
\node[] (fake-down-arrow-2-tail) [below right of = right-top-mp]{};
\node[] (fake-down-arrow-2-head) [right of = fake-down-arrow-2-tail]{};
\draw [->,very thick, black] (fake-down-arrow-2-tail) to node [auto] {} (fake-down-arrow-2-head) {};

\node[node] (left-bot-mrf-1) [above right of=fake-up-arrow-2-head] {\small $x_2^1$};
\node[edgefactor] (bot-factor-mrf-1) [right of=left-bot-mrf-1] {};
\node[] (fake-left-factor-mrf-1) [above right of=left-bot-mrf-1] {};
\node[edgefactorSheared, rotate around = {90:(fake-left-factor-mrf-1.center)}] (left-factor-mrf-1) [above right of=left-bot-mrf-1] {};
\node[node] (right-bot-mrf-1) [right of=bot-factor-mrf-1] {\small $x_3^1$};
\node[node] (left-top-mrf-1) [above right of=left-factor-mrf-1] {\small $x_1^1$};
\draw (left-top-mrf-1) to (left-factor-mrf-1) to (left-bot-mrf-1) to (bot-factor-mrf-1) to (right-bot-mrf-1);

\node[node] (right-bot-mrf-2) [below of=right-bot-mrf-1] {\small $x_3^2$};
\node[] (fake-right-factor-mrf-2) [above right of=right-bot-mrf-2] {};
\node[edgefactorSheared, rotate around = {90:(fake-right-factor-mrf-2.center)}] (right-factor-mrf-2) [above right of=right-bot-mrf-2] {};
\node[node] (right-top-mrf-2) [above right of=right-factor-mrf-2] {\small $x_4^2$};
\node[edgefactor] (top-factor-mrf-2) [left of=right-top-mrf-2] {};
\node[node] (left-top-mrf-2) [left of=top-factor-mrf-2] {\small $x_1^2$};	
\draw (right-bot-mrf-2) to (right-factor-mrf-2) to (right-top-mrf-2) to (top-factor-mrf-2) to (left-top-mrf-2);

\node[node] (right-bot-mp-1) [yshift=-12mm] [below of=right-bot-mrf-2] {\small $y_3^1$};
\node[maxEdgefactor] [yshift = 3mm] (right-factor-mp) [above right of=right-bot-mp-1] {};
\node[node] (right-top-mp-1)  [yshift = -3mm] [above right of=right-factor-mp] {\small $y_4^1$};
\draw[draw=bpotentialcolor] (right-bot-mp-1) to (right-factor-mp) to (right-top-mp-1);

\node[] (fake)  [left of=right-bot-mp-1] {};
\node[node] (left-bot-mp-2) [left of=fake] {\small $y_2^2$};
\node[maxEdgefactor] [yshift = 3mm] (left-factor-mp) [above right of=left-bot-mp-2] {};
\node[node] (left-top-mp-2) [yshift = -3mm] [above right of=left-factor-mp] {\small $y_1^2$};
\draw[draw=bpotentialcolor] (left-bot-mp-2) to (left-factor-mp) to (left-top-mp-2);

\node[node] (left-top-mp-3) [yshift=0mm] [below of = left-top-mp-2] {\small $y_1^3$};
\node[maxEdgefactor] [yshift = 3mm] (top-factor-mp) [right of=left-top-mp-3]{};
\node[node] (right-top-mp-3) [yshift = -3mm] [right of = top-factor-mp] {\small $y_4^3$};
\draw[draw=bpotentialcolor] (left-top-mp-3) to (top-factor-mp) to (right-top-mp-3);

\node[] (fake)  [below left of=left-top-mp-3] {};
\node[node] (left-bot-mp-4) [below left of=fake] {\small $y_2^4$};
\node[maxEdgefactor] [yshift = 3mm] (bot-factor-mp) [right of=left-bot-mp-4] {};
\node[node] (right-bot-mp-4) [yshift = -3mm] [right of=bot-factor-mp] {\small $y_3^4$};
\draw[draw=bpotentialcolor] (left-bot-mp-4) to (bot-factor-mp) to (right-bot-mp-4);

\node[maxfactor] [yshift = 3mm] (max-factor-mp) [ right of=left-factor-mp]{};
\draw[draw=bfactorcolor] (left-factor-mp) to (max-factor-mp) to (right-factor-mp);
\draw[draw=bfactorcolor] (top-factor-mp) to (max-factor-mp) to (bot-factor-mp);

\draw[dotted] (right-bot-mrf-1) to (right-bot-mrf-2) to (right-bot-mp-1) to (right-bot-mp-4);
\draw[dotted] (left-top-mrf-1) to (left-top-mrf-2) to (left-top-mp-2) to (left-top-mp-3);
\draw[dotted] (right-top-mrf-2) to (right-top-mp-1) to (right-top-mp-3);
\draw[dotted] (left-bot-mrf-1) to (left-bot-mp-2) to (left-bot-mp-4);


\end{tikzpicture}
    }
	\caption[]{Illustration of the decomposition~\eqref{eq:bottleneck-labeling-problem-decomposition} for the bottleneck labeling problem.
		Black squares \tikz \node [rectangle, draw=black, fill=black, scale = 0.7]{}; represent MRF-potentials $\theta_{ij}$.
		Orange diamonds \tikz \node [diamond, draw=bpotentialcolor, fill=white, scale = 0.4]{}; stand for bottleneck potentials $\phi_{ij}$.
		The blue star \tikz \node [star, draw=bfactorcolor, fill=bfactorcolor, scale = 0.5]{}; stands for bottleneck costs $\zeta$.
		The problem is decomposed into a MRF part (upper layer) and a bottleneck part (lower layer).
		In turn, the MRF-layer is decomposed into trees and the bottleneck part into chains.
		Bottleneck chains are connected by a global bottleneck term \tikz \node [star, draw=bfactorcolor, fill=bfactorcolor, scale = 0.5]{};.
	}
	\label{fig:grid-decomposition}
\end{figure*}
Since MAP-MRF is NP-hard and the bottleneck labeling problem generalizes it, we cannot hope to obtain an efficient exact algorithm for the general case.
As was done for MAP-MRF~\cite{WernerLocalPolytope,KomodakisDualDecompositionMRF,storvik2000lagrangian}, we approach the general case with a Lagrangian decomposition into tractable subproblems.
To this end, we decompose the underlying MRF problem into trees, which can be solved via dynamic programming.
The bottleneck subproblem is decomposed into a number of bottleneck chain labeling problems. To account for the global bottleneck term, these bottleneck chain problems are connected through a unary bottleneck labeling problem defined on a higher level graph. We use Algorithms~\algref{alg:chain-bottleneck-labeling-problem} and~\algref{alg:unary-bottleneck-labeling-problem} as subroutines to solve the bottleneck decomposition. An example of our decomposition can be seen in Figure~\ref{fig:grid-decomposition}. \\
To account for the MRF-inference problem, we cover the graph $G$ by trees $\SG_1 = (\SV_1,\SE_1),\ldots,(\SV_h,\SE_h)$.
For the bottleneck labeling problem we cover the graph $G$ by chains (trees) $\FG_1=(\FV_1,\FE_1),\ldots,\FG_k = (\FV_k,\FE_k)$.
For the decomposition we introduce variables $\bx^t$ to specify the labeling of the MRF subproblem for graph $\SG_t$, and variables $\by^l$ for the chain bottleneck labeling subproblems defined for graph $\FG_l$.
We propose the following overall decomposition:
\begin{subequations}
	\label{eq:bottleneck-labeling-problem-decomposition}
	\begin{align}
	\min\limits_{\substack{\bx, \{\bx^t\}, \{\by^l\},\\ b \in B}} & \btheta(\bx) + \zeta(b)& \\
	\text{s.t.} \qquad &x_f^t = x_f \quad &\forall f \in \SV_t \cup \SE_t, t \in [h] \label{eq:bottleneck-labeling-problem-decomposition-t} \\
	&y_f^l = x_f \quad &\forall f \in \FV_l \cup \FE_l, l \in [k] \label{eq:bottleneck-labeling-problem-decomposition-l} \\
	&\bx^t \in \SX_{\SV_t} \quad &\forall t \in [h] \\ 
	&\by^l \in \SX_{\FV_l} \quad &\forall l \in [k] \label{eq:bottleneck-labeling-problem-decomposition-y}\\
	&\phi_f(y_f^l) \leq b \quad &\forall f \in \FV_l \cup \FE_l, l \in [k]  \label{eq:bottleneck-labeling-problem-decomposition-b}
	\end{align}
\end{subequations}
We constrain the variables for MRF tree subproblems and chain bottleneck subproblems to be consistent via~\eqref{eq:bottleneck-labeling-problem-decomposition-t} and~\eqref{eq:bottleneck-labeling-problem-decomposition-l}.
The labelings on chains ($\by$) are required to be feasible with respect to the bottleneck value $b$ via~\eqref{eq:bottleneck-labeling-problem-decomposition-b}.\\
To obtain a tractable optimization problem, we dualize the constraints \eqref{eq:bottleneck-labeling-problem-decomposition-t}  and \eqref{eq:bottleneck-labeling-problem-decomposition-l} using dual variables $\lambda^t_e$ and $\eta^l_e$ respectively. 
We keep rest of the constraints and denote the feasible variables $\by^l$ for chain $l$ w.r.t constraints \eqref{eq:bottleneck-labeling-problem-decomposition-y}, \eqref{eq:bottleneck-labeling-problem-decomposition-b} by the set $Y^l(b)$ as:
\begin{equation}
Y^l(b) = \left\{\by^l \in \SX_{\FV_l}\; \middle| \phi_f(y_f^l) \leq b, \; \forall f \in \FV_l \cup \FE_l
\right\}
\label{eq:feasible-chain-labeling}
\end{equation}  
The dual problem is:
\begin{subequations}
	\begin{gather}
	\max\limits_{\blambda,\etab} \left[\sum\limits_{t \in [h]}  E^t(\blambda^t) + J(\etab) \right] \label{eq:dual-bottleneck-labeling-objective}\\
	\text{s.t.} \sum\limits_{\substack{t \in [h]: \\ f \in \SV_t \cup \SE_t}} \lambda^t_f + \sum\limits_{\substack{l \in [k]: \\f \in \FV_l \cup \FE_l}} \eta_f^l = \theta_f \quad \forall f \in V \cup E \label{eq:reparamterization-constraint}
	\end{gather}
\end{subequations}
Where $E^t(\blambda^t)$ and $J(\etab)$ are defined as:
\begin{subequations}
\begin{align}
E^t(\blambda^t) &:= \min\limits_{\bx^t \in \SX_{\SV_t}} \la \blambda^t, \bx^t \ra \label{eq:E_t}\\
J(\etab) &:= \min\limits_{b \in B} \left[\zeta(b) + \sum\limits_{l \in [k]} \min\limits_{y^l \in Y^l(b)} \la \etab^l, \by^l \ra \right]  \label{eq:J_eta}
\end{align}
\end{subequations}
Evaluating $E^t(\cdot)$ amounts to solving a MAP-MRF problem on a tree.
Evaluating $J(\cdot)$ corresponds to computing for each bottleneck value $b$ the corresponding minimal assignment from the set $Y^l(b)$ for all chains $l$ in $[k]$, and then choosing a bottleneck value such that the sum over all chain subproblems is minimal. \\
The dual problem~\eqref{eq:dual-bottleneck-labeling-objective} is a non-smooth concave maximization problem.
Evaluating $E^t(\cdot)$ and $J(\cdot)$ gives supergradients of~\eqref{eq:dual-bottleneck-labeling-objective} which can be used in subgradient based solvers (e.g. subgradient descent, bundle methods).

\myparagraph{Finding $E^t(\blambda^t)$:}
Each MRF tree subproblem $E^t(\blambda^t)$ can be solved independently using dynamic programming to get the optimal labeling $\bar{\bx}^t$.

\myparagraph{Finding $J(\etab)$:}
$J(\etab)$ is solved by Algorithm~\algref{alg:multiple-chain-bottleneck-labeling-problem}.
It proceeds as follows:
\begin{enumerate}[noitemsep,topsep=0pt,parsep=0pt,partopsep=0pt,wide=\parindent]
	\item Build a higher level graph $H = ([k], \varnothing)$ and represent each chain in $[k]$ as a node in $H$. Populate the potentials $M_l$ for all nodes $l \in [k]$ by going through all $b \in B$ using Algorithm~\algref{alg:chain-bottleneck-labeling-problem} (lines~\ref{alg:multiple-chain-bottleneck-labeling-problem-build-higher-level-graph-begin}-\ref{alg:multiple-chain-bottleneck-labeling-problem-build-higher-level-graph-end}).
	\item Use Algorithm~\algref{alg:unary-bottleneck-labeling-problem} to find an optimal bottleneck value $b^*$ of $b \in B$ in graph $H$ (lines~\ref{alg:multiple-chain-bottleneck-labeling-problem-solve-higher-level-graph-begin}-\ref{alg:multiple-chain-bottleneck-labeling-problem-solve-higher-level-graph-end}). 
	\item Find optimal labelings $\overline{y}$ for each chain subproblem by disallowing the configurations having bottleneck values more than $b^*$. (lines~\ref{alg:multiple-chain-bottleneck-labeling-problem-find-chain-labeling-begin}-
	\ref{alg:multiple-chain-bottleneck-labeling-problem-find-chain-labeling-end}).
\end{enumerate} 
\begin{algorithm}[h]
\SetAlgorithmName{chain\_decomp}{chain\_decomp}{List of algorithms}
	\caption{bottleneck labeling on chain decomposition of general graph}
	\KwData{
	    Chain MRFs: $\{R_l = (\FG_l, \SX_l, \etab^l)\}_{l \in [k]}$, \\
		Bottleneck potentials on chains: $\{\bphi^l\}_{l \in [k]}$ \\
	}
	\KwResult{Optimal solution $\{\by^l\}_{l \in [k]}$ and $b^*$ for $J(\etab)$}
	$H \leftarrow ([k], \varnothing)$ \hfill \tcp{higher level graph}	\label{alg:multiple-chain-bottleneck-labeling-problem-build-higher-level-graph-begin}
	\For{$l \in [k]$} {
		\tcp{Populate potentials for node $l$:}
		$M_l \leftarrow ~\algref{alg:chain-bottleneck-labeling-problem}(R_l, \bphi^l)$\;
		\For {$\forall (b,c) \in M_l$} {
			$({\Phi_l}(b), {\Theta_l}(b)) = (b, c)$ \ \tcp{Bott., MRF pots.}
			$\SY_l = \SY_l \cup b$ \hfill \tcp{Add $b$ as a label for $l$}
		}
	}
	\label{alg:multiple-chain-bottleneck-labeling-problem-build-higher-level-graph-end}
	\tcp{Solve graph $H$: }
	$\overline{M} \leftarrow ~\algref{alg:unary-bottleneck-labeling-problem}((H, \SY, \bTheta), \bPhi)$\;
	\label{alg:multiple-chain-bottleneck-labeling-problem-solve-higher-level-graph-begin}
	$(b^*,c^*) = \argmin_{(b,c) \in \overline{M}} \zeta(b) + c$\;
	\label{alg:multiple-chain-bottleneck-labeling-problem-solve-higher-level-graph-end}
	\tcp{Optimal labeling of chains:}
	\For{$l \in [k]$ 	\label{alg:multiple-chain-bottleneck-labeling-problem-find-chain-labeling-begin}} {	
		$(\FD_l = (W_l, \FA_l), \bsigma^l, \bomega^l) \leftarrow ~\algref{alg:chain-digraph-conversion}(R_l, \bphi^l)$
		\For{$(n_i, n_j) \in \FA_l$}
		{
			\If{$\omega^l(n_i, n_j) > b^*$}{
			    $\sigma^l(n_i, n_j) \leftarrow \infty$ 	\tcp{mark as infeasible}
			}
		}
		$\overline{\by}^l = \text{shortest $s,t$-path in } (W_l,A_l,\bsigma^l)$ \;  
	}
	\label{alg:multiple-chain-bottleneck-labeling-problem-find-chain-labeling-end}
	\mylabel{alg:multiple-chain-bottleneck-labeling-problem}{\textbf{chain\_decomp}}
\end{algorithm}

We optimize the Lagrange multipliers in the dual problem~\eqref{eq:dual-bottleneck-labeling-objective}
subject to constraints \ref{eq:reparamterization-constraint} with the proximal bundle method~\cite{FWMAP}. After solving the dual problem we have found a valid reparameterization of the MRF potentials~\eqref{eq:reparamterization-constraint} which maximizes the lower bound. This lower bound can be used as a measure of the quality of the primal solution and helps in rounding a primal solution.

\subsection{Primal rounding}
\label{subsection:primal-rounding}
For recovering the primal solution through rounding on MRF subproblems, we use the approach of~\cite{CTRWS} by treating the dual optimal values of $\blambda$ as MRF potentials. Additional details on how to best choose $\blambda$ and more details are given in Section~\ref{sec:primal-rounding-appendix} in the Appendix. \\
We also considered using message passing instead of subgradient ascent by computing the min-marginals through the procedure used in primal rounding. However, our strong relaxation does not easily allow an efficient way for min-marginal computation. Specifically, for the subproblem $J(\etab)$ in~\eqref{eq:J_eta} it is hard to reuse computations to recalculate min-marginals efficiently for different variables, making any message passing approach slow. Rounding is only executed once at the end, hence very efficient min-marginal computation is of lesser concern.


\section{Experiments}
\label{sec:experiments}
\begin{figure}[h]
	\centering
	\includegraphics[width=0.4\textwidth]{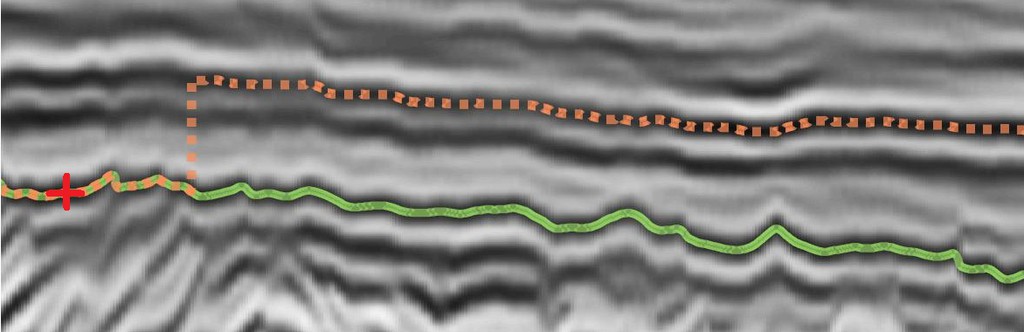}
	\caption[]{
        Exemplary failure case for conventional MRF (\textcolor{orange}{-} in dotted orange) in seismic horizon tracking problem as compared to MRF with an additional bottleneck potential (\textcolor{green}{-} in solid green). (\textcolor{red}{+})~indicates the seed. The MRF solution makes one local error with high cost and starts tracking another smoother layer leading to an overall lower cost solution. A bottleneck term penalizes such high cost errors and results in the correct track.}
	\label{fig:example_1}
\end{figure}

\begin{table*}
\small
    \centering
    \smallskip
    \setlength\tabcolsep{4.5pt} 
    \begin{tabular}{@{\extracolsep{4pt}}lcccccccrcccc@{}}
    \toprule
    & \multicolumn{6}{c}{\texttt{F3-Netherlands}} & \multicolumn{2}{c}{\texttt{Opunake-3D}} & \multicolumn{4}{c}{\texttt{Waka-3D}} \\
    \cline{2-7} \cline{8-9} \cline{10-13} \noalign{\smallskip}
    & I & II* & III* & IV & V* & VI & I* & II & I & II* & III* & IV \\
    \midrule 
    \multicolumn{13}{c}{Instance sizes} \\
    \midrule
     $\abs{V}$ & 263K & 153K & 362K & 153K & 231K & 101K & 443K & 2965K & 614K & 366K & 601K & 614K\\
      $\sum\limits_{i \in V} \abs{\SX_i}$ & 3271K & 1768K & 4394K & 1422K & 1896K & 944K & 406K & 2630K & 3524K & 2020K & 3110K & 3646K \\
    \midrule
    \multicolumn{13}{c}{Mean absolute deviation from ground truth} \\
\midrule
 \texttt{MST}
 & 1.6212 & 1.1245 &  1.6542 & 0.1283 & 0.1088 &  \textbf{1.5319} &  0.4930 & 0.4596 & 2.0060 & 0.2759 & 0.9457 & 1.6108\\
 \texttt{MRF}
 & 7.1435 & 6.2105 & 3.2169 & 0.1540 & 0.0896 &  2.4498 & \textbf{0.4786} & 0.9002 & 1.4073 & 0.0221 & 0.4245 & 1.2388\\
\texttt{B-MRF}
& \textbf{0.8881} & \textbf{0.2644} & \textbf{0.0889} & \textbf{0.0463} & \textbf{0.0894}  & 1.6049  & 0.4887 & \textbf{0.4209} & \textbf{1.1855} & \textbf{0.0116} & \textbf{0.4135} & \textbf{1.0377} \\
\midrule
\multicolumn{13}{c}{Runtimes (seconds)} \\
\midrule
\texttt{MST}
 & 14 & 7 & 19 & 3 & 6 & 7 & 8 & 8 & 8 & 5 & 7 & 8 \\
\texttt{MRF}
 & 673 & 139 & 931 & 62 & 65 & 33 & 114 & 522 & 740 & 22 & 687 & 781 \\
\texttt{B-MRF}
& 1820 & 1374 & 3108 & 1609 & 1554 & 1322 & 2979 & 2554 & 6548 & 2402 & 7738 & 8158 \\
    \bottomrule 
    \end{tabular}
    \smallskip
\caption{\textit{\textbf{Top:}} Instance sizes of horizon surfaces. $\abs{V}$ represents total number of nodes and $\sum\limits_{i \in V} \abs{\SX_i}$ the total number of labels in each instance. 
\textit{\textbf{Middle:}} Mean absolute deviation from the ground-truth. \textit{\textbf{Bottom:}} Runtimes using \texttt{MST}~\cite{MST}, TRWS~\cite{CTRWS} solver for \texttt{MRF}, and our method \texttt{B-MRF}. * marks the horizon surfaces used to train the CNN for computing patch similarity~\eqref{eq:cnn-probability}.}
    \label{tab:results}
\end{table*}
\begin{figure}
\subfigure[Ground-truth]{\includegraphics[width=0.2\textwidth]{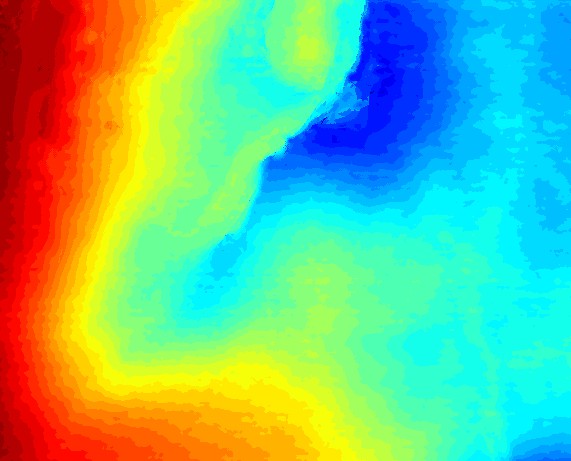}}
\hfill
\subfigure[\texttt{MST} (\textit{MAD=1.6212})]{\includegraphics[width=0.2\textwidth]{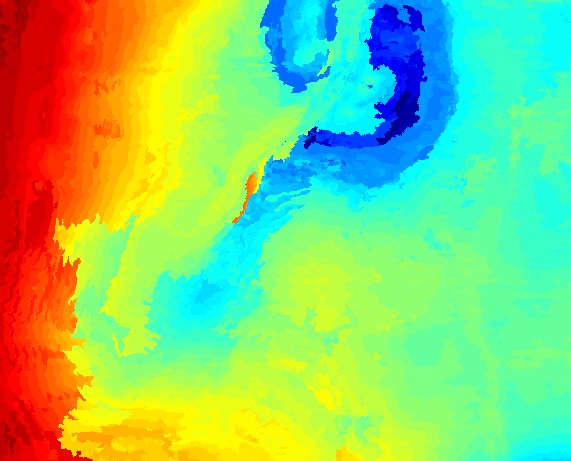}}
\hfill
\subfigure[\texttt{MRF} (\textit{MAD=7.1435})]{\includegraphics[width=0.2\textwidth]{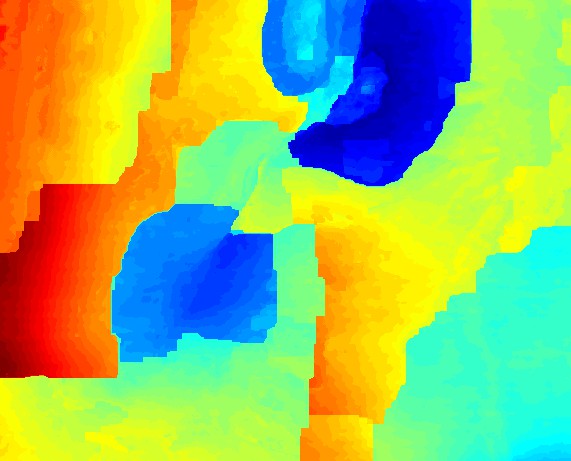}}
\hfill
\subfigure[\texttt{B-MRF} (\textit{MAD=\textbf{0.8881}})]{\includegraphics[width=0.2\textwidth]{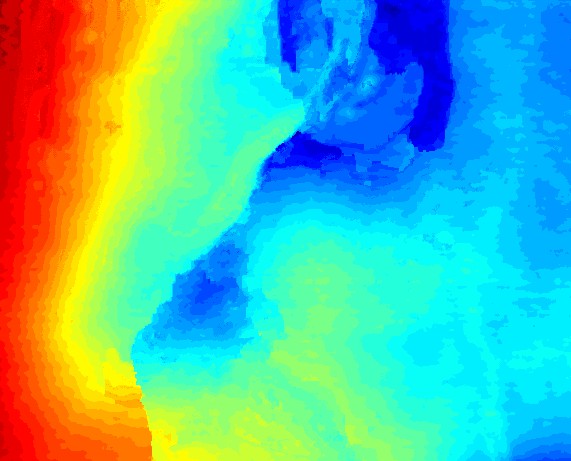}}
\hfill
\caption{Comparison of results for horizon surface \texttt{F3-Netherlands-I}, colors indicate depth of the surface. Mean absolute deviation (\textit{MAD}) scores are used as error metric. The \texttt{MRF} solution is piece-wise smooth but tracks wrong layers, \texttt{MST} is incorrect in the top-right and lower-left region, while our approach of \texttt{B-MRF} has least errors. Best viewed in color.}
\label{fig:F3-I} 
\end{figure}
We apply our proposed technique for tracking layers (horizons) in open source sub-surface volumetric data (a.k.a.\ seismic volumes). Accurate tracking of horizons is one of the most important problems in seismic interpretation for geophysical applications. See Fig.~\ref{fig:example_1} for an illustration of a cross-section of a typical seismic volume. \\
Popular greedy approaches such as~\cite{MST} rely on tracking horizons by establishing correspondences between nearby points that lie on the same horizon with high probability. As such they are prone to fall into local optima. \\
On the other hand a natural option to track horizons is to state the problem as MAP-inference in MRFs for which high-quality solvers exist that are less sensitive to local optimal.
However, the cost structure of MRFs is not adequate for our problem.
Specifically, the MRF energy is composed of a sum of local terms defined over nodes and edges of the graphical model that account for similarity between seed and adjacent  nodes respectively.
For the cost function it may be more advantageous to track a wrong horizon if it is more self-similar and pay a higher dissimilarity cost at a few locations rather than follow the correct horizon if it is harder to follow. This problem is exacerbated by the typically rather weak unary costs (see below).
For an illustration of this behaviour we refer to Figure~\ref{fig:example_1}. \\
We argue that the bottleneck potential remedies this shortcoming of MAP-MRF, while preserving its advantages. In contrast to unary and pairwise potentials of MRFs, the bottleneck potential penalizes a single large discontinuity in the tracking that comes from jumping between layers more than multiple small discontinuities stemming from a hard to track horizon. As such it prefers to follow a rugged horizon with no big jump over a single large jump followed by an easy to track horizon.

Further computational challenges in horizon tracking are:
\begin{itemize}[noitemsep,topsep=0pt,parsep=0pt,partopsep=0pt,wide=\parindent]
	\item Nearby rock layers look very similar to the layer in consideration, due to which tracking algorithms can easily jump to a wrong layer. 
	\item Due to structural deformations in the subsurface environment, rock layers can have discontinuities.
	\item The appearance of a horizon can vary at different locations. Nearby horizon layers can bifurcate or disappear. This makes estimation of reliable cost terms for our model~\eqref{eq:bottleneck-labeling-problem} difficult.
	\item The size of a seismic survey can be huge. Some open-source seismic volumes cover a $3000km^2$ survey area. The corresponding input data has a size of $100$GB.
	\item Lack of established open-source ground truth makes the evaluation procedure and learning of parameters difficult.
\end{itemize}

\myparagraph{Seismic horizon tracking formulation:}
Our input are 3-D seismic volumes of size $N_1 \times N_2 \times D$, where $D$ corresponds to the depth axis and $N_1, N_2$ to x,y-axes resp. 
For each location $(x,y) \in [N_1] \times [N_2]$ in the volume we seek to assign a depth value $z \in [D]$. We formalize this as searching for a labeling function $z: [N_1] \times [N_2] \rightarrow [D]$. \\
For computational experiments we used publicly available seismic volumes~\cite{seismic_volume_webpage}. 
Due to lack of available ground truth, we have selected three volumes (\texttt{F3 Netherlands, Opunake-3D, Waka-3D}) and tracked 11 horizons by hand with the commercial seismic interpretation software~\cite{GVERSE}.
Annotation took two days of work of an experienced seismic interpreter.

\myparagraph{Matching costs:}
In order to track horizons, established state-of-the-art tracking algorithms~\cite{MST,PDE,2d_seismic_horizon_tracking_shortest_path} rely on computing similarity scores between patches that might or might not lie on the same horizon.
This is traditionally done using hand-crafted features that rely on cross-correlation or optical flow.
We opt for a learning-based approach following the great success of CNN-based architectures for computing patch similarity~\cite{patch-compare-CNN, patch-compare-CNN-2}.
Specifically, we extend the architecture of the 2-D patch similarity CNN~\cite{patch-compare-CNN} to efficiently deal with the 3-D structure. CNN architecture and the procedure for training is mentioned in Figure~\ref{fig:cnn} and Section~\ref{sec:CNN-training} in Appendix.
We train two CNNs to compute matching probabilities between depth labels $z_{ij} = z_i, z_j$ of nodes $i = (x_i, y_i), j = (x_j, y_j)$.
The two neural networks $\text{CNN}_n$ and $\text{CNN}_f$ compute matching probabilities between adjacent and non-adjacent nodes respectively:
\begin{equation}
	p_{ij}(z_{ij}) = \text{CNN}_{n|f}(I_i(z_i), I_j(z_j)) \in [0,1]
	\label{eq:cnn-probability}
\end{equation}
In~\eqref{eq:cnn-probability} $I_k(z_k)$, is a two channel image of size $63\times63$ centered around $(x_k, y_k, z_k)$. \\
\myparagraph{Graph construction:}
We rephrase computing a depth labeling as labeling nodes in a graph. 
Consider the grid graph $G=(V,E)$ with nodes $V = [N_1] \times [N_2]$ corresponding to points on $x$ and $y$ axes of the seismic volume. Edges $E$ are given by a $4-$neighborhood induced by the $(x,y)$-grid.
For each node $v \in V$ the label space is $\mathcal{X}_v = [D]$.
A depth labeling thus corresponds to a node labeling of $V$.
We compute unary $\theta_v : [D] \rightarrow \R$, $v \in V$ and pairwise potentials $\theta_{uv} : [D] \times [D] \rightarrow \R$, $uv \in E$ based on the patch matching costs computed in~\eqref{eq:cnn-probability}. Further details about unary and pairwise potentials can be found in Section~\ref{sec:cost-formulation} in Appendix.

\myparagraph{Algorithms:}
We compare our method with a plain MRF, and a state-of-the-art heuristic for horizon tracking based on a variation of minimum spanning tree (MST) problem.
\begin{itemize}[noitemsep,topsep=0pt,parsep=0pt,partopsep=0pt,wide=\parindent]
    \item \texttt{MST:} The horizon tracking approach of~\cite{MST} is a greedy approach inspired by Bor\r{u}vka's algorithm for MST~\cite{boruvka}. It starts by marking the nodes where the seed labels are given and iteratively marks assigns label to the adjacent node that contributes minimal cost w.r.t.\ unary and pairwise potentials.
    \item \texttt{MRF:} We solve MAP-MRF on $G=(V,E)$ with unary and pairwise potentials $\theta$ from above with TRWS~\cite{CTRWS}.
\item \texttt{B-MRF:} We additionally include a bottleneck term on pairwise bottleneck potentials~\eqref{eq:bottleneck-potentials-formula}. 
We solve the problem with the algorithm from Section~\ref{subsection:arbitrary-graphs}, and round a primal solution with the procedure from Section~\ref{subsection:primal-rounding}.
\end{itemize}

\myparagraph{Results:}
Table~\ref{tab:results} lists the mean absolute deviation (MAD) of different methods for tracking horizon surfaces. MAD is computed as $\frac{\|d - \tilde{d}\|_1}{N_1\cdot N_2}$ i.e., the $L_1$ norm of the difference in tracked surface $\tilde{d}$ and the ground-truth $d$, normalized by the dimensions $N_1, N_2$ of the surface. \\
From table~\ref{tab:results} we see that our method \texttt{B-MRF} outperforms \texttt{MRF} and \texttt{MST} by a wide margin on most problems. The resulting depth surfaces can be seen in Figure~\ref{fig:F3-I}, and the rest in the Appendix. 
Inspecting the results we see that \texttt{MRF} finds piece-wise continuous assignments not necessarily corresponding to the correct surface but preferring easy to track ones.
On the other hand, \texttt{MST} often follows the horizon quite well but as soon as it marks a pixel wrongly, it often cannot recover due to its greedy nature.
Our method addresses these shortcomings to a large degree.
The resulting surfaces are smooth and track the correct surface to a larger extent than both \texttt{MST} and \texttt{MRF} by favoring piece-wise smooth regions due to the ordinary unary and pairwise MRF costs, yet following tracks that do not make incorrect jumps due to the additional bottleneck costs. 
Since we optimize a global energy by a convex relaxation method, we suffer less from poor local optimal than \texttt{MST}.
On the other hand, the more expressive optimization model and advanced algorithms that allow \texttt{B-MRF} to obtain higher accuracy also lead to higher runtimes.
	{\small
        \bibliographystyle{ieee_fullname}

		\bibliography{referencesMain}
	}
	\clearpage

\section{Appendix}
 \begin{lemma} 
    \label{lemma:local-polytope-noninteger}
 	The following relaxation over the local polytope $\Lambda$~\eqref{eq:local-polytope} is not tight for bottleneck labeling problem:
 	\begin{equation}
 	\label{eq:Local-Polytope-with-bottleneck}
 	\min_{\substack{\mu \in \Lambda \\ b}} \sum_{i \in V} \la \theta_i , \mu_i \ra + \sum_{ij \in E} \la \theta_{ij}, \mu_{ij} \ra + b
 	\end{equation}
 	\hspace{42pt}s.t.
 	\begin{subequations}
 		\begin{align}
 		&&b \geq & \la \phi_{i}, \mu_{i} \ra, &&\forall {i \in V} \\
 		&&b \geq & \la \phi_{ij}, \mu_{ij} \ra, &&\forall {ij \in E}
 		\end{align}
 	\end{subequations}
 \end{lemma}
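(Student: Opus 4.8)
The plan is to establish non-tightness by producing one small instance on which the optimal value of \eqref{eq:Local-Polytope-with-bottleneck} is strictly below the optimal value of the bottleneck labeling problem \eqref{eq:bottleneck-labeling-problem} (with $\zeta(b)=b$, matching the $+b$ term in \eqref{eq:Local-Polytope-with-bottleneck}). Since \eqref{eq:Local-Polytope-with-bottleneck} relaxes integrality of $\mu$, drops $b\in B$, and replaces each family of constraints $\phi_f(x_f)\le b$ by the single linear constraint $\la\phi_f,\mu_f\ra\le b$, its value is always a lower bound, so a single strict inequality suffices. The phenomenon to exploit is that integrally $b$ must dominate a \emph{maximum} of bottleneck values, whereas in the relaxation it only has to dominate an \emph{expectation} $\la\phi_f,\mu_f\ra$; a half-integral $\mu$ that keeps the MRF cost cheap can therefore push $b$ well below what any integral labeling allows.

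Concretely I would use the two-node chain $G=(\{1,2\},\{12\})$ with $\SX_1=\SX_2=\{0,1\}$, zero unary MRF potentials, a pairwise term penalizing disagreement, $\theta_{12}(0,0)=\theta_{12}(1,1)=0$ and $\theta_{12}(0,1)=\theta_{12}(1,0)=M$ for a constant $M>1$, zero pairwise bottleneck potential $\phi_{12}\equiv 0$, and \emph{opposing} unary bottleneck potentials $\phi_1(0)=0$, $\phi_1(1)=1$, $\phi_2(0)=1$, $\phi_2(1)=0$, so that $B=\{0,1\}$. The first step is then a four-case check for the integral problem: the two agreement labelings have MRF cost $0$ but force $b\ge\max\{\phi_1(x_1),\phi_2(x_2)\}=1$, for total cost $1$, while the two disagreement labelings cost $M>1$; hence the bottleneck labeling problem has optimum exactly $1$.

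The second step is to exhibit the half-integral witness $\mu_{12}(0,0)=\mu_{12}(1,1)=\tfrac12$ and $\mu_{12}(0,1)=\mu_{12}(1,0)=0$, which induces $\mu_1=\mu_2=(\tfrac12,\tfrac12)$ and lies in $\Lambda$; one checks that its MRF cost $\la\theta_{12},\mu_{12}\ra$ equals $0$ and that the bottleneck constraints only require $b\ge\la\phi_1,\mu_1\ra=\tfrac12$, $b\ge\la\phi_2,\mu_2\ra=\tfrac12$, $b\ge\la\phi_{12},\mu_{12}\ra=0$, so the value of \eqref{eq:Local-Polytope-with-bottleneck} is at most $\tfrac12<1$. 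This proves the relaxation is not tight; I would also remark that padding the chain with arbitrarily many additional neutral nodes leaves the gap intact, so non-tightness holds for chains of every length, which is the contrast with the integrality of the plain MRF local polytope on chains that the section needs.

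I do not expect a genuine obstacle: the only real work is choosing the potentials so that (i) the integral optimum is attained at an agreement labeling rather than a disagreement one, which $M>1$ guarantees, and (ii) the half-integral $\mu$ strictly undercuts it, which follows from $\phi_1,\phi_2$ being in opposition and $\theta_{12}$ penalizing disagreement. The one point worth stating carefully is that in \eqref{eq:Local-Polytope-with-bottleneck} the variable $b$ ranges over all of $\R$ (not over $B$) and appears linearly, so at the optimum $b=\max_f\la\phi_f,\mu_f\ra$; it is precisely this "relaxed maximum" that collapses to $\tfrac12$ on the half-integral point and produces the gap.
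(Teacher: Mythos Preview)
Your proof is correct and follows the same overall strategy as the paper: exhibit a small chain instance together with a half-integral $\mu$ that averages out the bottleneck constraints, so that $b$ only needs to dominate an expectation rather than a maximum. The specific counterexample differs, though. The paper uses three nodes $\{u,v,w\}$ with two labels each and \emph{only} pairwise bottleneck potentials (no MRF costs at all): $\phi_{uv}(0,0)=a$, $\phi_{uv}(1,1)=a+\epsilon$, $\phi_{vw}(0,0)=a+\epsilon$, $\phi_{vw}(1,1)=a$, with the off-diagonal configurations effectively forbidden; both integral labelings give $a+\epsilon$, while the half-integral point gives $a+\epsilon/2$. Your example instead uses two nodes, places the opposition in the \emph{unary} bottleneck potentials, and enforces agreement via a pairwise MRF penalty $M$. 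Your instance is smaller and makes the role of the MRF-vs-bottleneck interaction explicit; the paper's instance is cleaner in that it shows the gap arises from the bottleneck term alone, with $\btheta\equiv 0$, which more directly supports the point the section is making (that even on chains, where the plain local polytope is tight, adding the averaged bottleneck constraint breaks tightness). Either example suffices for the lemma.
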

 \begin{proof}
 	We give a proof by example as in the Figure \ref{fig:non-integer-chain} with a 3 node binary graphical model only containing pairwise bottleneck potentials with $\epsilon > 0$. 
 	Both integer optimal solutions have cost: $max(a,a+\epsilon) = a+\epsilon$, whereas the optimal solution of the LP relaxation $\mu_{uv}(0,0) = \mu_{uv}(1,1) = \mu_{vw}(0,0) = \mu_{vw}(1,1) = 0.5$ has cost $a+\frac{\epsilon}{2}$.
 	\begin{figure}
		\centering
		\begin{tikzpicture}

	\node[circle,fill] (x11) {}; 
	\node[circle,fill,below of=x11] (x12) {}; 
	\node[draw,inner sep=2mm,label=below:$u$,fit=(x11) (x12)] {};
	
	\node[circle,fill,right=3cm of x11] (x21) {}; 
	\node[circle,fill,below of=x21] (x22) {}; 
	\node[draw,inner sep=2mm,label=below:$v$,fit=(x21) (x22)] {};
	
	\node[circle,fill,right=3cm of x21] (x31) {}; 
	\node[circle,fill,below of=x31] (x32) {}; 
	\node[draw,inner sep=2mm,label=below:$w$,fit=(x31) (x32)] {};
	
	\draw (x11) to (x21);
	\draw (x12) to (x22);
	\node[label=above:{\textcolor{black}{$a$}},fit=(x11) (x22)] {};
	\node[label=below:{\textcolor{black}{$a + \epsilon$}},fit=(x11) (x22)] {};
	
	\draw (x21) to (x31);
	\draw (x22) to (x32);
	\node[label=above:{\textcolor{black}{$a + \epsilon$}},fit=(x21) (x32)] {};
	\node[label=below:{\textcolor{black}{$a$}},fit=(x21) (x32)] {};

\end{tikzpicture}
		\caption[Non-integrality of Local Polytope relaxation for chain bottleneck labeling]{Example of non-integer solution to the Local Polytope relaxation for a chain graphical model containing nodes $\{u, v, w\}$ with 2 labels for each node and containing only the bottleneck pairwise potentials $\phi_{uv}(\cdot, \cdot), \phi_{vw}(\cdot,\cdot)$.}
		\label{fig:non-integer-chain}
 	\end{figure}
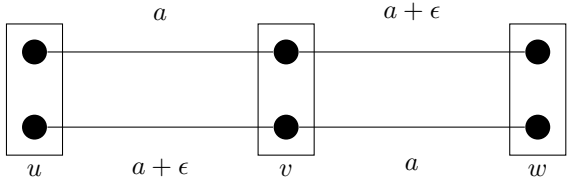
 \end{proof}
 \SetKw{Continue}{continue}

 \begin{algorithm}
 \SetAlgorithmName{chain\_to\_dag}{chain\_to\_dag}{List of algorithms}
 	\caption{chain MRF to directed acyclic graph transformation}
	\KwData {
	    Chain MRF: $(G=(V,E), \SX, \{\theta_f\}_{f \in V \cup E})$, \\
		Bottleneck potentials:  $\{\phi_f(x_f)\} _ {f \in V \cup E}$, \\
    }
	\KwResult
	{
		Directed graph: $D=(W,A)$ \\
		Linear costs: $\sigma(h, t) \quad \forall (h, t) \in A$ \\
		Bottleneck costs: $\omega(h, t) \quad \forall (h, t) \in A$ \\
		Start and terminal nodes $s,t$
	}
    \tcp{Represent each label with two nodes and add source, sink:}
    $W = \{s,t\} \cup \{x_i, \overline{x}_i : i \in V, x_i \in \SX_i\}$\;
    \tcp{Add arcs to represent potentials:}
    $A = \begin{array}{l}
    \{(s, x_{1}) : x_{1} \in \SX_{1}\} \ \cup \\
    \{( x_i,\overline{x}_i) : i \in V, x_i \in \SX_i\} \ \cup \\
    \{\overline{x}_{i}, x_{i+1}) : j\in[n-1], x_{i,i+1} \in \SX_{i,i+1}\}\ \cup \\
    \{( \overline{x}_{n}, t) : x_{n} \in \SX_{n}\}
    \end{array}
    $ \\

\tcp{Unary potentials:}
$\begin{array}{l}
    \sigma(\overline{x}_i,x_i) = \theta_i(x_i) \\
    \omega(\overline{x}_i,x_i) = \phi_i(x_i)
    \end{array}$ \hfill $\forall i \in V, x_i \in \SX_i$\;
    \tcp{Pairwise potentials:}
    \nosemic$\begin{array}{l}
    \sigma(\overline{x}_i,x_{i+1}) = \theta_{i,i+1}(x_{i,i+1}) \\
    \omega(\overline{x}_i,x_{i+1}) = \phi_{i,i+1}(x_{i,i+1})
    \end{array}$\;\pushline\dosemic\nonl$\forall (i,i+1) \in E, x_{i,i+1} \in \SX_{i,i+1}$\;
    
    \tcp{Connect $s$ and $t$:}
	$A' = \cup \begin{array}{c}
	\left\{(s, x_1) : x_1 \in \SX_{1} \right\} \\ 
	\left\{(\overline{x}_n, t) : x_n \in \SX_{n} \right\}
	\end{array}$\;

	\mylabel{alg:chain-digraph-conversion}{\textbf{chain\_to\_dag}}
\end{algorithm}

\begin{algorithm}
\SetAlgorithmName{dsp\_chain}{dsp\_chain}{List of algorithms}
\caption{dynamic shortest path on chains}
 	\KwData{Diagraph: $D = (W,A)$ \\
 		Node distances: $d(w), \quad \forall w \in W$ \\
 		Arc costs: $\sigma(p,q), \quad \forall (p,q) \in A$ \\
 		Arc to insert: $(u,v), \quad u,v \in W$ \\
 	}
 	\KwResult{
 		Updated node distance: $d$ \\
 		Updated nodes: $S \subseteq W$
 	}
 	
 	Initialize $Q = \varnothing$\;
 	$Enqueue(Q, (u,v))$ \;
 	\While{$Q \neq \varnothing$} {
 		$(p,q) = Dequeue(Q)$ \;
 		\If{$d(q) \geq c(p,q) + d(p)$}{
 			$\Continue $\;
 		}
 		$d(q) \coloneqq c(p,q) + d(p)$ \;
 		$S = S \cup \{q\}$ \;
 		\tcp{Enqueue outgoing arcs from $q$ for possible distance update:}
 		\ForEach{$(q, r) \in A$} {
 			$Enqueue(Q, (u,v))$ \;
 		}
 	}
 	\mylabel{alg:dynamic-shortest-path-for-chains}{\textbf{dsp\_chain}}
 \end{algorithm}
 
 \begin{proof}[Lemma~\ref{prop:chain-bottleneck-labeling-problem-runtime}]
Shortest path computation (line~\algref{alg:chain-bottleneck-labeling-problem-shortest-path-update}) on directed acyclic graph $D$ can be done in linear time in the number of arcs $A$ using breadth-first traversal by Algorithm~\algref{alg:dynamic-shortest-path-for-chains}. However, shortest path update is performed every time after introducing each arc in $A$ (lines \ref{alg:chain-bottleneck-labeling-problem-arc-addition-begin}-\ref{alg:chain-bottleneck-labeling-problem-arc-addition-end}), thus making the run-time $\mathcal{O}(|A|^2)$ i.e. quadratic in the number of arcs in $A$. The sorting operation adds an additional factor of $|A|\log{}|A|$.
\end{proof}

\subsection{Experiment details}
\myparagraph{Cost formulation:}
\label{sec:cost-formulation}
For unary potentials of seed node $s$, we set 
$\theta_s(z_s) = \begin{cases} 0,& z_s = z^*_s\\ \infty, & \text{otherwise} \end{cases}$.
Rest of the unary potentials are computed as:
\begin{equation}
	\theta_{i}(z_i) = -\log(\text{CNN}_f(I_s(z_s^*), I_i(z_{i})))
	\label{eq:unary-costs}
\end{equation}
The pairwise potentials contain terms in-addition to patch similarity from the CNN. First, we compute optical flow using the gradient structure tensor based approach of~\cite{bakker-thesis} on the seismic volume along the x and y-axis resulting in optical flow $u_x$ and $u_y$.
From this we compute the displacement penalization for edges in x-direction as
$f_{ij}(z_{ij}) = \abs{z_i - z_j - u_x(x_i,y_i,z_i)}$,
and analogously for edges in y-direction. 
Next, we compute a coherence estimate $C_{ij}$~\cite{bakker-thesis} indicating whether the optical flows $u_x$ and $u_y$ are reliable. 
The pairwise MRF potentials combine the above terms into a sum of appearance terms~\eqref{eq:cnn-probability} and weighted discontinuity penalizers:
\begin{equation}
\theta_{ij}(z_{ij}) = 
\begin{array}{l}
-\log(\text{CNN}_n(I_i(z_i), I_j(z_j))\\ + C_{ij}(z_{ij}) \abs{f_{ij}(z_{ij})}
\end{array}\,
\label{eq:pairwise-costs}
\end{equation} 
The second term allows discontinuities in the horizon surfaces where orientation estimates can be incorrect and penalizes discontinuities where the horizon is probably continuous. 
The authors from~\cite{MST,2d_seismic_horizon_tracking_shortest_path} do not use the coherence estimate, making their cost functions less robust.
The bottleneck pairwise potentials are:
\begin{equation}
\phi_{ij}(z_{ij}) = \abs{E}\ \theta_{ij}(z_{ij})
\label{eq:bottleneck-potentials-formula}
\end{equation}
The scaling parameter $\abs{E}$ in~\eqref{eq:bottleneck-potentials-formula} makes the bottleneck potential invariant to the grid size.

\myparagraph{CNN training:}
\label{sec:CNN-training}
We use six out of eleven labeled horizons for training the CNN adopted from~\cite{patch-compare-CNN} mentioned in Figure~\ref{fig:cnn}. To prevent the CNN from learning the whole ground truth for these six horizons only $10\%$ of the possible patches are used. For data augmentation, translation is performed on non-matching patches as also done in~\cite{patch-compare-CNN-2} for stereo. The validation set contains $2\%$ of the possible patches from each of the eleven horizons. \\ 
For training the $\text{CNN}_n$ in~\eqref{eq:pairwise-costs}, edges in underlying MRF models are sampled for creating the patches. Training was done using PyTorch~\cite{paszke2017automatic} for $200$ epochs  and the model with the best validation accuracy ($95\%$) was used for computing the potentials. \\
For training the $\text{CNN}_f$ used in~\eqref{eq:unary-costs}, we use the same procedure as above except that the patches are sampled randomly and thus their respective nodes do not need to be adjacent. The best validation accuracy on the trained model was $83\%$. \\
\subsection{Primal rounding details}
\label{sec:primal-rounding-appendix}
The function $s(i)$ defines some ordering for all nodes $i$ in $V$.
Based on this order, we sequentially fix labels of the respective node as in Algorithm~\algref{alg:primal-rounding-MRF}.
\begin{algorithm}
\SetAlgorithmName{primal\_rounding}{primal\_rounding}{List of algorithms}
	\caption{primal rounding based on MRF subproblem~\cite{CTRWS}}
	\KwData {
		MRF: $(G = (V,E), \SX, \blambda)$, \\ 
		Ordering of nodes in $G$: $s(v), \quad \forall v \in V$.
	}
	\KwResult
	{
		Primal labeling: $\hat{\bx} \in \SX_V$.
	}
	$V_o \coloneqq V$ \;
	Sort nodes in $V_o$ w.r.t ordering $s(v)$ \;
	$V_l \coloneqq \varnothing, \quad $ \tcp{Set of labeled nodes}
	\For{$i \in V_o$}
	{
		\tcc{Assign label to node $i$ in accordance with the nodes already labeled:}
		$\hat{x}_i \coloneqq \argmin\limits_{x_i \in \SX_i}\left[ \lambda_i(x_i) + \sum\limits_{j \in V_l:ij \in E}\lambda_{ij}(x_i, \hat{x}_j)\right]$ \;
		$V_l = V_l \cup \{i\}, \quad$ \tcp{Mark $i$ as labeled}
	}
	\mylabel{alg:primal-rounding-MRF}{\textbf{primal\_rounding}}
\end{algorithm}

For any given dual variables, there is a whole set of equivalent dual variables that give the same dual lower bound. 
However, when rounding with~\algref{alg:primal-rounding-MRF}, the final solution is sensitive to the choice of dual equivalent dual variables.
One of the reasons is that the rounding is done on the MRF subproblems only, so we want to ensure that the dual variables $\lambda$ carry as much information as possible.
Therefore, we propose a propagation mechanism that modifies the dual variables $\etab$ of the bottleneck potentials such that the overall lower bound is not diminished, while at the same time making the MRF dual variables as informative as possible.
This procedure  is described in Algorithm~\algref{alg:min-marginals-chains}. \\
\begin{algorithm}
    \SetAlgorithmName{Min\_Marginals\_BMRF}{Min\_Marginals\_BMRF}{List of algorithms}
	\caption{Min marginals for bottleneck MRF subproblems}
	\KwData{
		Bottleneck chain graphs: $\{\FG_l\}_{l \in [k]}$, \\
		Linear potentials on chains: $\{\etab^l\}_{l \in [k]}$, \\
		Bottleneck potentials on chains: $\{\bphi^l\}_{l \in [k]}$, \\
		Chain to compute min-marginals: $u \in [k]$, \\
		Costs in higher level graph $\overline{H} = ([k]\setminus \{u\}, \varnothing)$ : \\
		\qquad Bottleneck costs: $\overline{b}$, \quad Linear costs: $\overline{c}$ 
	}
	\KwResult{
		Min-marginals of nodes in $u$: $m_i(\overline{y}_i^u) =
		\min\limits_{b \in B} \left[ \begin{array}{l} \zeta(b) + 
			\min\limits_{\substack{y^u \in Y^u(b):\\ \overline{y}_i^u = y_i^u}} \la \etab^u, \by^u \ra  \\ +
			\sum\limits_{l \in [k]\setminus \{u\}} \min\limits_{y^l \in Y^l(b)} \la \etab^l, \by^l \ra 
		 \end{array} \right]$, \\
		\flushright $\forall i \in \FV_i, \overline{y}_i^u \in \SX_i$
	}
	\tcp{Initialize min-marginals:}
	$m_i(y_i) \coloneqq \infty, \quad \forall i \in \FV_u, y_i \in \SX_i$ \;
	\tcp{Represent chain $u$ as DAG:}
	$(D=(W,A), \bsigma, \bomega) \leftarrow ~\algref{alg:chain-digraph-conversion}(\FG_u, \etab^u, \bphi^u)$ \;
	\tcp{Forward (source-to-sink) shortest path structures:}
	$A'_r = \cup \begin{array}{c}
	\left\{(s, x_1) : x_1 \in \SX_{1} \right\} \\ 
	\left\{(\overline{x}_n, t) : x_n \in \SX_{n} \right\}
	\end{array}$ \\
	$d_r(s) \coloneqq 0$, $d_r(w) \coloneqq \infty, \forall w \in W\setminus\{s\}$\;
	\tcp{Backward (sink-to-source) shortest path structures:}
	$A'_l = \cup \begin{array}{c}
	\left\{(x_1, s) : x_1 \in \SX_{1} \right\} \\ 
	\left\{(t, \overline{x}_n) : x_n \in \SX_{n} \right\}
	\end{array}$ \\
	$d_l(t) \coloneqq 0$, \quad $d_l(w) \coloneqq \infty, \forall w \in W\setminus\{t\}$\;
	Sort $A$ according to values $\bomega$\;
	\For{$(n_i, n_j) \in A$ in ascending order} {
		$A' = (n_i, n_j) \cup A' $ \;
		\tcp{Forward update:}
		$(d_r, S_r) = ~\algref{alg:dynamic-shortest-path-for-chains}((W,A'_r),d_r,\bsigma,(n_i, n_j))$\; 
		\tcp{Backward update:}
		$(d_l, S_l) = ~\algref{alg:dynamic-shortest-path-for-chains}((W,A'_l),d_l,\bsigma,(n_j, n_i))$ \;
		\For{$\{n = (v, y_v)\} \in S_r \cup S_l$}{
			\tcp{Update min-marginal of node $v \in \FV_u$, label $y_v \in \SX_v$:}
			$(b_v, c_v) \leftarrow ~\algref{alg:unary-bottleneck-labeling-problem} ($ \\ $(\{\overline{H}, n\},\varnothing), \{\overline{b}, \omega(n))\},\{\overline{c}, d_r(n) + d_l(n)\} )$\;\label{alg:min-marginals-chains-update-start}
			$m_v(y_v) \coloneqq \min\left(m_v(y_v), \zeta(b_v) + c_v\right)$\;
			\label{alg:min-marginals-chains-update-end}
		}
	}
	\mylabel{alg:min-marginals-chains}{\textbf{Min\_Marginals\_BMRF}}
\end{algorithm}
Schemes similar to Algorithm~\algref{alg:min-marginals-chains} were also proposed in~\cite{min-marginals-Shekhovtsov} for exchanging information between pure MRF subproblems. Intuitively, the goal in such schemes is to extract as much information out of a source subproblem (in our case bottleneck MRF subproblem) and send it to the target subproblem (in our case MRF subproblem)  such that the overall lower bound does not decrease. Such a strategy helps the target subproblem in taking decisions which also comply with the source subproblem. \\
Algorithm~\algref{alg:min-marginals-chains} computes min-marginals of nodes for a given bottleneck chain subproblem $u$. Proceeding in a similar fashion as Algorithm~\algref{alg:chain-bottleneck-labeling-problem}, min-marginals computation on a chain $u$ is done as follows:
\begin{enumerate}[noitemsep,topsep=0pt,parsep=0pt,partopsep=0pt,wide=\parindent]
	\item As an input, the costs of solution in all other chains excluding $u$ are required, which can be obtained in the similar fashion as was done before in Algorithm ~\algref{alg:multiple-chain-bottleneck-labeling-problem} (lines ~\algref{alg:multiple-chain-bottleneck-labeling-problem-build-higher-level-graph-begin}-\ref{alg:multiple-chain-bottleneck-labeling-problem-solve-higher-level-graph-end}).
	\item The algorithm maintains forward and backward shortest path distances $d_r, d_l$  (i.e., distances from source, sink resp.). This helps in finding the cost of minimum distance path passing through any given node in the directed acyclic graph of chain $u$. 
	\item Similar to Algorithm~\algref{alg:chain-bottleneck-labeling-problem}, bottleneck potentials are sorted and the bottleneck threshold is relaxed minimally on each arc addition. 
	\item On every arc addition, the set of nodes for which distance from source/sink got decreased are maintained in $S_r, S_l$ resp. Only this set of nodes will need to re-compute their min-marginals. 
\end{enumerate}
The above-mentioned Algorithm~\algref{alg:min-marginals-chains} only calculates the min-marginals for nodes, the calculation of min-marginals for edges can be carried out in similar way by also updating those edges of the DAG whose adjacent node gets updated (lines ~\algref{alg:min-marginals-chains-update-start}-~\algref{alg:min-marginals-chains-update-end}). After computing the min-marginals, messages to MRF tree Lagrangians $\blambda$ can be computed by appropriate normalization.

\begin{figure*}[h]
	\centering
	\includegraphics[width=0.9\textwidth]{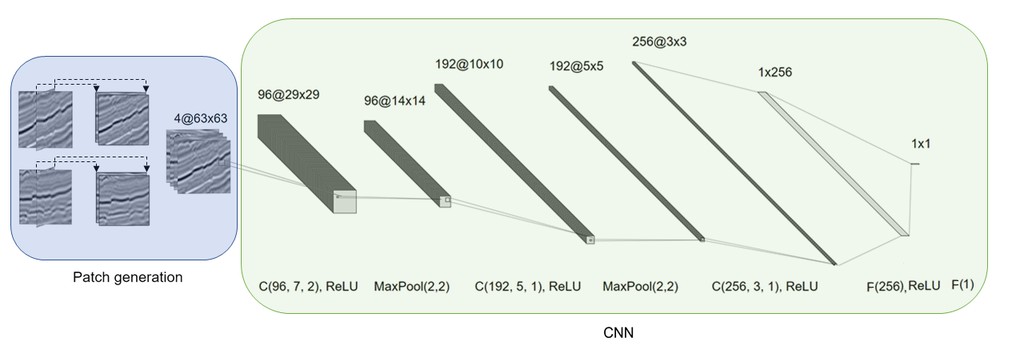}
	\caption[]{Workflow to compute the patch matching probabilities \eqref{eq:cnn-probability}. Two axis-aligned patches are extracted around each voxel in seismic volume to get 2-channel patch image. These two images are concatenated to get a 4-channel image which would be used as an input to the CNN~\cite{patch-compare-CNN} for computing patch matching probabilities by softmax activation at the output. C($n$, $k$, $s$) denotes a convolutional layer having $n$ filters of size $k\times k$ with stride $s$, F($n$) to a fully connected layer with $n$ outputs, and MaxPool($m, n$) to max-pooling with kernel size $m \times m$ and stride $n$. }
	\label{fig:cnn}
\end{figure*}

\subsection{Results}
Figures~\ref{fig:F3-II}-~\ref{fig:waka-IV} contain the visual comparison of tracked horizon surfaces mentioned in Table~\ref{tab:results}. Similar to Figure ~\ref{fig:F3-I}, the color of the surface denotes depth.  

\edef\i{1}
\foreach \h[count=\j] in {II,III,IV,V,VI} {
\pgfmathparse{int(\i+1)} 
\xdef\i{\pgfmathresult}
\begin{figure*}[h]
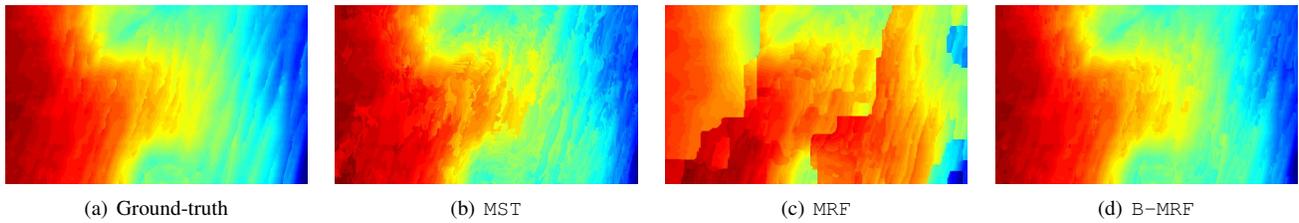
 
\hfill
\subfigure[Ground-truth]{\includegraphics[width=0.23\textwidth]{Figures/Surfaces/F3-\i-GroundTruth-surface.jpg}}
\hfill
\subfigure[\texttt{MST}]{\includegraphics[width=0.23\textwidth]{Figures/Surfaces/F3-\i-MST-surface.jpg}}
\hfill
\subfigure[\texttt{MRF}]{\includegraphics[width=0.23\textwidth]{Figures/Surfaces/F3-\i-MRF-surface.jpg}}
\hfill
\subfigure[\texttt{B-MRF}]{\includegraphics[width=0.23\textwidth]{Figures/Surfaces/F3-\i-maxW-1-surface.jpg}}
\caption{F3-Netherlands-\h}
\label{fig:F3-\h}
\end{figure*}
}

\foreach \h[count=\i] in {I,II} {
\begin{figure*}[h]
\hfill
\subfigure[Ground-truth]{\includegraphics[width=0.23\textwidth]{Figures/Surfaces/Opunaka-\i--GroundTruth-surface.jpg}}
\hfill
\subfigure[\texttt{MST}]{\includegraphics[width=0.23\textwidth]{Figures/Surfaces/Opunaka-\i--MST-surface.jpg}}
\hfill
\subfigure[\texttt{MRF}]{\includegraphics[width=0.23\textwidth]{Figures/Surfaces/Opunaka-\i--MRF-surface.jpg}}
\hfill
\subfigure[\texttt{B-MRF}]{\includegraphics[width=0.23\textwidth]{Figures/Surfaces/Opunaka-\i--maxW-1-surface.jpg}}
\hfill
\caption{Opunaka-3D-\h}
\label{fig:opunaka-\h}
\end{figure*}
}

\foreach \h[count=\i] in {I,II,III,IV} { 
\begin{figure*}[h]
\hfill
\subfigure[Ground-truth]{\includegraphics[width=0.23\textwidth]{Figures/Surfaces/R-WakaILXL-\i--GroundTruth-surface.jpg}}
\hfill
\subfigure[\texttt{MST}]{\includegraphics[width=0.23\textwidth]{Figures/Surfaces/R-WakaILXL-\i--MST-surface.jpg}}
\hfill
\subfigure[\texttt{MRF}]{\includegraphics[width=0.23\textwidth]{Figures/Surfaces/R-WakaILXL-\i--MRF-surface.jpg}}
\hfill
\subfigure[\texttt{B-MRF}]{\includegraphics[width=0.23\textwidth]{Figures/Surfaces/R-WakaILXL-\i--maxW-1-surface.jpg}}
\hfill
\caption{Waka-3D-\h}
\label{fig:waka-\h}
\end{figure*}
}

\end{document}